\newif\ifblind
\newtheorem{theorem}{Theorem}
\newtheorem{lemma}{Lemma}
\newtheorem{corollary}{Corollary}
\journal{Aerospace Science and Technology}
\begin{document}

\begin{frontmatter}

\title{Global Sensitivity Analysis for Engineering Design Based on Individual Conditional Expectations}

\author[itb]{Pramudita Satria Palar\corref{cor1}}
\ead{pramsp@itb.ac.id}

\author[irit]{Paul Saves}

\author[sju]{Rommel G. Regis}

\author[kyushu]{Koji Shimoyama}

\author[tohoku]{Shigeru Obayashi}

\author[irit]{Nicolas Verstaevel}

\author[isae]{Joseph Morlier}

\cortext[cor1]{Corresponding author}

\address[itb]{Faculty of Mechanical and Aerospace Engineering, Institut Teknologi Bandung, Bandung, Indonesia}
\address[irit]{
Université Toulouse Capitole, IRIT, Toulouse, France }
\address[sju]{Department of Mathematics, Saint Joseph's University, Philadelphia, PA, USA}
\address[kyushu]{Department of Mechanical Engineering, Kyushu University, Fukuoka, Japan}
\address[tohoku]{Institute of Fluid Science, Tohoku University, Sendai, Japan}
\address[isae]{ISAE-SUPAERO, Université de Toulouse, Toulouse, France}

\begin{abstract}
Explainable machine learning techniques have gained increasing attention in engineering applications, especially in aerospace design and analysis, where understanding how input variables influence \textcolor{black}{predictive} models is essential. Partial Dependence Plots (PDPs) are widely used for interpreting black-box models by showing the average effect of an input variable on the prediction. However, their global sensitivity metric can be misleading when strong interactions are present, as averaging tends to obscure interaction effects. To address this limitation, we propose a global sensitivity metric based on Individual Conditional Expectation (ICE) curves. The method computes the expected feature importance across ICE curves, along with their standard deviation, to more effectively capture the influence of interactions. \textcolor{black}{The proposed metrics are model-agnostic and can be applied to any predictive model, including but not limited to surrogate models.} Furthermore, we provide a mathematical proof demonstrating that the PDP-based sensitivity is a lower bound of the proposed ICE-based metric under additive and multiplicative separability. In addition, we introduce an ICE-based correlation value to quantify how interactions modify the relationship between inputs and the output. Comparative evaluations were performed on three cases: a 5-variable analytical function, a 5-variable wind-turbine fatigue problem, and a 9-variable airfoil aerodynamics case, where ICE-based sensitivity was benchmarked against PDP, SHapley Additive exPlanations (SHAP), and Sobol' indices. The results show that ICE-based feature importance provides richer insights than the traditional PDP-based approach, while visual interpretations from PDP, ICE, and SHAP complement one another by offering multiple perspectives.
\end{abstract}

\begin{keyword}
Global sensitivity analysis, Individual Conditional Expectations, Partial Dependence Plot, Engineering Design, Aerodynamics
\end{keyword}

\end{frontmatter}



\section{Introduction}
High-fidelity simulations are essential in modern engineering design. \textcolor{black}{Therefore, aerospace engineers} often rely on computer simulations to assess and optimize performance, enabling accurate prediction of behavior under varied conditions and supporting informed, performance- and safety-driven decisions~\cite{saves2025surrogate}. Yet high-fidelity simulations remain challenging for design exploration because engineers can only afford to run a limited number of computationally intensive simulations. To address this, a surrogate model that estimates the relationship between input or design variables with the {Q}uantity of {I}nterest (QoI) plays an important role in accomplishing this task. The surrogate can be used \textcolor{black}{for} various purposes such as optimization~\cite{bhosekar2018advances} and uncertainty analysis~\cite{cheng2020surrogate}. Various types of surrogate models exist, among popular ones in engineering, including polynomial regression (including Polynomial Chaos Expansion (PCE)~\cite{ghanem1999stochastic}), Kriging/Gaussian process regression~\cite{rasmussen2003gaussian, saves2023mixed}, and deep neural network~\cite{lecun2015deep} to reduce the computational cost of inference.

Sensitivity analysis, particularly {G}lobal {S}ensitivity {A}nalysis (GSA), supports engineering endeavours by identifying which variables are most and least influential~\cite{razavi2021future}. In the context of engineering design exploration, this information is important for setting design priorities, such as determining which variables should be prioritized. For instance, in the field of aerodynamics, GSA aids in analyzing the impact of geometric or kinematic parameters on the aerodynamic performance of a multi-element airfoil~\cite{pm2023sensitivity}, a hovering flapping wing~\cite{lang2023sensitivity}, and a high-pressure capturing wing~\cite{xiaozhe2024surrogate}.  Common techniques in engineering include variance-based decomposition methods, such as {Sobol'} indices~\cite{Sobol2001global}, which measure the individual effects of input variables and their interactions on the {QoI}. Although variance-based decomposition is valuable, it only quantifies how much each input contributes to output variance but does not directly reveal the input–output functional form, such as local nonlinearity or the detailed structure of interactions. 



Besides GSA, sometimes it is also of interest to understand the behavior of the black-box function when the input of interest is changed. In artificial intelligence and \textcolor{black}{Machine Learning (ML), explainability serves several purposes}, including building trust in the model and enhancing transparency~\cite{belle2021principles, roscher2020explainable}. Even in engineering, transparency is also vital, as it allows engineers and researchers to verify that the ML model is consistent with established domain knowledge. In this paper, we aim to utilize explainability methods for knowledge discovery in engineering design, viewing the ML model not just as a predictive tool, but also as a source of valuable insights.  Recently, there has been growing interest in unifying the frameworks of sensitivity analysis and explainable ML~\cite{scholbeck2023position}.


Techniques from explainable ML and graphical statistics such as {P}artial {D}ependence {P}lot (PDP)~\cite{friedman2001greedy}, {I}ndividual {C}onditional {E}xpectation (ICE)~\cite{goldstein2015peeking}, and {SH}apley {A}dditive ex{P}lanations (SHAP)~\cite{lundberg2017unified} can be leveraged to understand the {behavior} of the black-box function. These methods provide insights into how input variables influence the predictive model and interact with each other. Such information is invaluable in engineering and science, helping engineers and scientists extract meaningful insights from the constructed model. The applications of explainable ML in engineering are now gaining significant attention, with examples not only in aerospace and mechanical engineering~\cite{shukla2020opportunities, shen2022automatic, wen2025explainable, kumar2024multisource}, but also in other fields such as material science~\cite{pilania2021machine}.


SHAP is arguably one of the most widely used explainability methods. However, despite its widespread adoption, SHAP has certain limitations. While SHAP offers robust insights, the specialized nature of its theoretical foundation may present a learning curve for practitioners not familiar with game theory concepts~\cite{kumar2020problems}. Although SHAP decompositions provide faithful local attributions, aggregating those local explanations into a single global summary can conceal heterogeneity across instances~\cite{herbinger2024decomposing}. 
In this context, PDP and ICE offer more intuitive explanations of an ML model due to their simplicity and straightforward depiction of relationships compared to SHAP. They function by altering feature values and observing the corresponding changes in predictions, a concept closely aligned with causal reasoning and easier to understand. Therefore, SHAP is best used together with PDP/ICE or conditional analyses when a clear global picture of feature interactions is required~\cite{mahmood2025machine}.

This paper examines how PDP and ICE curves support engineering design exploration with surrogate models, especially for visualization and global sensitivity analysis. PDPs provide a convenient global sensitivity measure for input importance~\cite{greenwell2018simple}, but their averaging can mask interactions and lead to misleading conclusions if used alone. To address this, we propose to leverage ICE to inspect the individual ICE curves behind the PDP. Even if ICE is still restricted to independent features, it reveals heterogeneous effects and interactions hidden by averaging. We propose a new global sensitivity metric by computing and averaging the standard deviation of each ICE curve, thereby capturing the influence of variable interactions. Furthermore, the standard deviation of importance across all ICE curves is used to quantify the overall impact of interactions, while the ICE correlation value evaluates how such interactions modify the input–output relationship and its nonlinearity. The proposed approach is validated through three case studies: a 5-variable analytical function, a 5-variable wind turbine fatigue problem, and a 9-variable airfoil aerodynamics case.

\section{Motivations}
\label{sec:sota}
The widespread use of high-fidelity simulations in aerospace means that engineers must do more than predict the QoI: careful interpretation of black-box surrogate models is now required. This section briefly surveys the convergence of GSA and \textcolor{black}{Explainable Artificial Intelligence (XAI)}, and exposes that engineering requires sensitivity analysis that goes beyond classical variance decomposition.

\subsection{Global Sensitivity Analysis in Engineering Design}




In high-dimensional engineering problems, GSA is essential for knowledge discovery and variable screening; the dominant framework is variance-based sensitivity methods that are widely considered the “gold standard” in various fields~\cite{Sobol2001global,sudret2008global}. For high-dimensional problems, estimating variance contributions can be prohibitive in terms of computational time, especially for dummy Monte Carlo implementation~\cite{Demange-Chryst}. This has led to the popularity of Derivative-based Global Sensitivity Measures (DGSM)~\cite{kucherenko2016derivative}, which use gradients to bound sensitivity indices, and Leave-one-covariate-out  metrics~\cite{lei2018distribution}, which focus on predictive power rather than variance. Still, many different techniques may alleviate the problem of sensitivity indices computation in high dimensions~\cite {li2024efficient,spagnol2020kernel}, but with a small number of data points to learn on in high dimensions, these indices remain difficult to estimate~\cite{becker2018sensitivity}.

Concerning the computational cost, it is worth noting that many authors have proposed leveraging surrogate models, such as PCE, to perform GSA for complex systems in a tractable and time-efficient manner~\cite{sudret2008global}. Even with many advances, sensitivity indices only provide a single scalar measure of influence, indicating that a variable is important but failing to describe the mechanism of influence. For instance, if the response is monotonically increasing, quadratic, or exhibits a threshold, an index will not reveal how the output behaves across the input's range. Understanding this detailed shape of the output response is a prerequisite for making informed design trade-offs~\cite{borgonovo2025direction}.

\subsection{The Rise of Explainable AI for Engineering}

XAI can complement scalar GSA by exposing the functional and instance-level behavior of complex surrogates used in aerospace design. In practice, XAI delivers two complementary views: \emph{local} explanations (why a specific prediction was made \textcolor{black}{for a specific input}) and \emph{global} explanations (how the model behaves across the input space). In engineering workflows, local explanations support root-cause analysis and failure investigation, while global views guide design exploration and trade-offs. The most used technique in such a context is SHAP~\cite{lundberg2017unified} that provides principled \emph{local} attributions based on cooperative game theory. 



On the other hand, PDP~\cite{friedman2001greedy} estimates a variable’s marginal profile by averaging the model output over other inputs.  PDPs are intuitive and widely used to visualise input-output trends, but their averaging hides heterogeneous interactions: when an input has opposite effects in different regions, PDPs can cancel these effects and produce a misleadingly flat profile, a serious hazard in safety-critical cases. On the other hand, ICE curves~\cite{goldstein2015peeking} plot per-instance response trajectories, revealing heterogeneity that PDP averages obscure. ICE therefore offers richer diagnostic power for engineering problems, in which understanding is critical. However, their utility in automated engineering design pipelines is hindered by the absence of standardised quantitative metrics or scalar summaries necessary for efficient variable ranking. A key research priority for broader engineering adoption is the development of ICE-based scalar metrics, which is the topic of the present paper. These XAI tools, when used with awareness of their assumptions and computational costs, form a complementary toolkit for interpretable aerospace design. 



\subsection{Positioning of the Present Work}
There is currently a lack of formalised metrics that translate the visual richness of ICE curves into scalar sensitivity indices suitable for automated ranking in engineering design. To that end, this work bridges the gap between GSA and the visual interpretability of ICE.  We propose novel \textit{ICE-based sensitivity metrics} that are based on the dispersion of ICE curves. By focusing on conditional expectations, we derive metrics that:
\begin{enumerate}
    \item Capture interaction effects that are mathematically cancelled out in standard PDP-based metrics.
    \item Provides a theoretical lower bound to the proposed ICE-based sensitivity metric.
    \item Offers a "correlation" metric to identify whether interactions purely scale the output or fundamentally alter the functional relationship.
\end{enumerate}
This approach allows engineers to retain the intuitive visualization of response curves while ensuring that critical interactions are not masked by averaging artefacts.
\textcolor{black}{This paper introduces sensitivity analysis tools derived from local conditional behavior that bridge the computational gap between local and global sensitivity analysis and provide a quantitative complement to graphical methods. Also, the methods proposed in this paper apply to any response function, including black-box models and surrogate representations. For computationally expensive models, such as those arising from computational fluid dynamics simulations, the proposed GSA metrics can be evaluated using a surrogate model, provided that the surrogate achieves sufficient predictive accuracy to ensure reliable sensitivity estimates and knowledge extraction.}

\section{\textcolor{black}{Surrogate Modeling for Global Sensitivity and Explainability Analysis}}

\subsection{Surrogate model}
\label{sec:surrogatemodel}

Let $\boldsymbol{x}=(x_{1},x_{2},\ldots,x_{m})^{T}$ be the vector of the input of interest, where $m$ is the number of input variables. Assuming independence between input variables, let us define the domain of the input variables as $\boldsymbol{\Omega}=\prod_{j=1}^{m}\Omega_{j}$, where $\Omega_{j}$ is the domain for the $j$-th input variable. Let us also define the joint density function $\boldsymbol{\pi}(\boldsymbol{x}) = \prod_{j=1}^{m} \pi_{x_{j}}(x_{j})$, where $\pi_{x_{j}}$ represents the marginal density of $x_j$. Also, let $y=f(\boldsymbol{x})$ be a black-box function that relates $\boldsymbol{x}$ to the {QoI} $y$. A surrogate model \(\hat{f}(\boldsymbol{x})\) essentially approximates the relationship between $y$ and $\boldsymbol{x}$, based on the given data set $\mathcal{D}=\{(\boldsymbol{x}^{(i)},y^{(i)})\}_{i=1}^{n}$, where $n$ is the size of the data set. In this paper, we use the data-driven non-intrusive PCE as our surrogate model of choice, although other models could also be applicable. 



While more detailed explanations are given in~\cite{blatman2011adaptive}, the PCE model is briefly outlined below. The expansion reads as
\begin{equation}
f(\boldsymbol{x}) \approx \hat{f}(\boldsymbol{x}) = \sum_{\boldsymbol{\gamma} \in \mathbb{A}_{p}} \beta_{\boldsymbol{\gamma}} \boldsymbol{\Psi}_{\boldsymbol{\gamma}}(\boldsymbol{x}),
\end{equation}
where the $\boldsymbol{\Psi}_{\boldsymbol{\gamma}}$'s are multidimensional polynomials that are orthonormal with respect to the joint density \textcolor{black}{$\boldsymbol{\pi}(\boldsymbol{x})$}, and $\boldsymbol{\gamma}=\{\gamma_{1},\ldots,\gamma_{m}\}$ with $\gamma_{i}\geq 0$ for each $i$, denotes the individual indices which are members of $\mathbb{A}_{p}$. The coefficients $\boldsymbol{\beta}=\{\beta_{\gamma}\}$ corresponds to the terms of the PCE coefficients, where $P$ is the cardinality of $\mathbb{A}_{p}$. The index set $\mathbb{A}_{p}$ is constructed using a total-order truncation~\cite{blatman2011adaptive}:
\begin{equation}
\mathbb{A}_{p} \equiv \{\boldsymbol{\gamma} \in \mathbb{N}^{m}: \|\boldsymbol{\gamma}\|_{1} \leq p\},
\end{equation}
where $p$ denotes the polynomial order and
\begin{equation}
\|\boldsymbol{\gamma}\|_{1} = \sum_{i=1}^{m} \gamma_i.
\end{equation}

A PCE model is constructed by solving $\boldsymbol{F}\boldsymbol{\beta}=\boldsymbol{y}$, where $F_{ij} = \Psi_{j}(\boldsymbol{x}^{(i)})$ forms the $n \times P$ design matrix, with each column representing a polynomial term evaluated at the data points. Here, $\boldsymbol{y}=(y^{(1)},\ldots,y^{(n)}) = (f(\boldsymbol{x}^{(1)}),\ldots,f(\boldsymbol{x}^{(n)}))$ represents the responses. In this paper, this system of equations is solved using the least-angle regression 
algorithm. The optimal order $p$ is sought by exploring various \(\mathbb{A}_{p}\) sets, ranging from \(\mathbb{A}_{p=1}\) to \(\mathbb{A}_{p=p_{max}}\), where \(p_{max}\) represents the highest polynomial order considered.

\textcolor{black}{The method and the framework proposed in this paper, explained in detail in Sect.~\ref{sec:theory_ice},  are surrogate-agnostic and do not rely on any specific properties of PCE. PCE is adopted here due to its spectral and additive structure, which facilitates the theoretical derivation of the inequality in Eq.~(\ref{eq:ICE_ineq}) and enables analytical tractability, as detailed in Appendix~A. In addition, for the engineering demonstrations considered in this paper, the PCE surrogate achieves excellent predictive accuracy. }In this study, we employ the data-driven PCE module from UQLab~\cite{marelli2014uqlab}, whereby the orthogonal polynomials for the engineering problems are constructed directly from the available data by inferring the probability distributions first.

\subsection{Partial Dependence Plots and Individual Conditional Expectations}


Let us define $[1:m]=\{1,2,\ldots,m\}$. Next, let $S \subset [1:m]$ represent a subset of these indices, and let $C=[1:m] \setminus S$ denote the complementary set of indices not included in $S$. Also, let $\boldsymbol{x}_{S}=(x_{j}|j \in S)$ and $\boldsymbol{x}_{C}=(x_{j}|j \in C)$ represent the vectors of inputs corresponding to $S$ and $C$, respectively. Specifically, we want to build the partial dependence function of the output of the variables indexed by $S$. Let $\Omega_{S}$ be the subdomain for the variables indexed by $S$, {\textit{i.e.}}, $\boldsymbol{\Omega}_{S} = \prod_{j \in S} \Omega_{j}$. Similarly, $\boldsymbol{\Omega}_{C}=\prod_{j \in C}  \Omega_{j}$. Finally, we define the joint probability density functions for these subsets as $\boldsymbol{\pi}_{\boldsymbol{x}_{S}} = \prod_{j \in S} \pi_{x_j}(x_j)$ and $\boldsymbol{\pi}_{\boldsymbol{x}_{C}} = \prod_{j \in C} \pi_{x_j}(x_j)$.

The general formulation of the partial dependence function for $\boldsymbol{x}_{S}$ is written as:
\begin{equation}
\label{eq:PDP}
\hat{f}_{pdp}(\boldsymbol{x}_{S})  = \mathbb{E}_{\boldsymbol{x}_{C}}\left[\hat{f}\left(\boldsymbol{x}_{S}, \boldsymbol{x}_{C}\right)\right] = \int_{\boldsymbol{\Omega}_{C}} \hat{f}(\boldsymbol{x}_{S}, \boldsymbol{x}_{C}) \boldsymbol{\pi}_{\boldsymbol{x}_{C}}(\boldsymbol{x}_{C}) d\boldsymbol{x}_{C},
\end{equation}
which represents the marginal expectation of $\hat{f}(\boldsymbol{x})$ over $\boldsymbol{x}_{C}$. For example, if $S=\{1\}$, then we have an interest in building the partial dependence function for the first variable, that is $\hat{f}_{pdp}(x_{1})$. Similarly, if $S=\{1,2\}$, we have an interest in building the joint partial dependence function for $x_{1}$ and $x_{2}$, that is $\hat{f}_{pdp}(x_{1},x_{2})$. PDP essentially shows how a specific set of input variables ($\boldsymbol{x}_{S}$) affects the output of a model, by averaging out the effects of all other variables ($\boldsymbol{x}_{C}$). The partial dependence function can be conveniently calculated using random sampling as follows:
\begin{equation}
    \label{eq:PDP_MC}
    \hat{f}_{pdp}(\boldsymbol{x}_{S}) \approx \frac{1}{N} \sum_{i=1}^{N}\hat{f}(\boldsymbol{x}_{S}, \boldsymbol{x}_{C}^{(i)}),
\end{equation}
where $N$ is the size of the random sampling set and $\boldsymbol{x}_{C}^{(i)}$ is the $i$-th realization of samples drawn from $\boldsymbol{\pi}_{\boldsymbol{x}_{C}}(\boldsymbol{x}_{C})$.

Because PDP averages over other features, it can hide interaction effects; For example, showing a flat line when equal portions of data have positive and negative correlations with the output. This issue is resolved by displaying the {ICE} curves together with the main PDP~\cite{goldstein2015peeking}. A single ICE curve essentially represents a single  $\hat{f}(\boldsymbol{x}_{S}, \boldsymbol{x}_{C}^{(i)})$ (see Eq.~(\ref{eq:PDP_MC})). That is, a single ICE plot illustrates the effect of changing the input of interest while keeping the other variables constant.
To model interaction is to capture non-additivity and more complex effects. Visually, ICE diagnostics reveal the {structure} of interactions: heterogeneity in curve slopes indicates that the effect of \textcolor{black}{$\boldsymbol{x_S}$} depends on $\boldsymbol{x_{C}}$. 



Anchoring the PDP and ICE curves at a specific point can better visualize the interactions. In depicting PDP and ICE curves, we anchor the curves at the centre or mean of the input space. The anchored partial dependence function ({\textit{i.e.}}, $ \hat{f}_{pdp}^{\text{anc}}(\boldsymbol{x}_{S}$) is reads as
\begin{equation}
    \hat{f}_{pdp}^{\text{anc}}(\boldsymbol{x}_{S}) = \hat{f}_{pdp}(\boldsymbol{x}_{S})- \hat{f}_{pdp}(\boldsymbol{x}_{S}^{a}),
\end{equation}
where $\boldsymbol{x}_{S}^{a} = \mathbb{E}(\boldsymbol{x}_{S})$ is the anchor point. \textcolor{black}{The second term on the right-hand side corresponds to the value of the partial dependence function evaluated at the anchor point $\boldsymbol{x}_S^a$. Subtracting this constant value shifts the entire PDP so that it is zero at the anchor point, without altering its shape.} The ICE curves are also anchored using the same principle. 
In the discussions and results that follow, we always depict the PDP and ICE curves anchored through this procedure.
\textcolor{black}{
Both PDP and ICE may be misleading when dealing with correlated inputs. 
This work assumes that all inputs are independent, which is reasonable for design exploration or parameteric studies~\cite{kruskal1988miracles}.}

\section{Derivation of GSA indices from PDP and ICE}
\label{sec:theory_ice}

A key contribution of this work is to bridge the gap between the visual insight of ICE curves and formal global-sensitivity measures grounded in distributional distances. Below, we make that connection precise, discuss the assumptions, and highlight how our ICE-based metric can serve as a practical proxy for more general sensitivity measures.

\paragraph{Global sensitivity analysis}
PDP can provide a global sensitivity derived from the standard deviation of its partial dependence function, {\textit{i.e.}}, $I_{pdp}$. To make the notation simpler, let us focus on quantifying the importance of a single variable $x_{j}$. Let us first define the variance of the partial dependence function of $x_{j}$ as 

\begin{equation}
\text{Var}_{x_{j}}\big[\hat{f}_{pdp}(x_{j})\big] = \mathbb{E}_{x_{j}}\left[\left(\hat{f}_{pdp}(x_{j}) - \mu_{\hat{f}_{pdp}(x_{j})}\right)^{2}\right], 
\end{equation}
where 
\begin{equation}
    \mu_{\hat{f}_{pdp}(x_{j})} = \mathbb{E}_{x_{j}} \left[\hat{f}_{pdp}(x_{j})\right].
\end{equation}
The PDP feature importance is then written as~\cite{greenwell2018simple}:
\begin{equation}
    \label{eq:GSA_PDP}
   I_{pdp}\left(x_{j}\right)= \sqrt{\text{ Var}_{x_{j}}[\hat{f}_{pdp}(x_{j})]}\approx\sqrt{\frac{1}{K-1} \sum_{k=1}^K\left(\hat{f}_{pdp}\left(x_{j}^{(k)}\right)-\frac{1}{K} \sum_{l=1}^K \hat{f}_{pdp}\left(x_j^{(l)}\right)\right)^2}, 
\end{equation}
where $K$ is the number of samples drawn from $\pi_{x_{j}}(x_{j})$. 

\(I_{pdp}\)  corresponds exactly to the (unnormalised) first‑order effect in the Hoeffding–Sobol' (functional ANOVA) decomposition~\cite{Sobol2001global}. However, it should not be interpreted as a proportion of total variance unless normalised by the total variance itself, because it only measures the spread of the conditional mean. Moreover, since PDP is computed by averaging model predictions over the complementary variables, its curve can blur interaction effects. This is precisely why ICE curves (and their dispersion) are needed when heterogeneity or interaction is suspected~\cite{goldstein2015peeking}.

A similar metric for assessing the interaction term can also be computed. Formally, let $\boldsymbol{x}_{S}=\{x_{i},x_{j}\}$ be the set of two variables that we have an interest in. First, build the joint partial dependence function $\hat{f}_{pdp}(x_{i},x_{j})$. From $\hat{f}_{pdp}(x_{i},x_{j})$, we can calculate $I_{pdp}(x_{i}|x_{j})$ for each value of $x_{j}$, in other words, we compute the partial dependence function of $x_{i}$ on the joint dependence function $\hat{f}_{pdp}(x_{i},x_{j})$. The standard deviation of $I_{pdp}(x_{i}|x_{j})$ is then computed, and the same is done for \textcolor{black}{$I_{pdp}(x_{j}|x_{i})$}. The standard deviation values for both $I_{pdp}(x_{i}|x_{j})$ and $I_{pdp}(x_{j}|x_{i})$ are then averaged to yield the interaction effect. Regardless, Eq.~(\ref{eq:GSA_PDP}) still represents only the average effect and might cause misunderstanding, which is why we introduce ICE-based feature importance, as explained in the next section.

\subsection{ICE-based feature importance}

In this paper, we address the limitations of PDP-based feature importance by introducing an ICE-based feature importance metric. While conceptually similar to PDP-based importance, the key distinction is that we first quantify the feature importance for each individual ICE curve before aggregating. Formally, the variance of a single ICE curve with respect to $x_{j}$, given the $i$-th realization of the complementary features in $\boldsymbol{\Omega}_{C}$ (denoted as $\boldsymbol{x}_{C}^{(i)}$), is calculated as follows:

\begin{equation}
    \mathrm{Var}_{x_{j}}\left[\hat{f}(x_{j},\boldsymbol{x}_{C}^{(i)})\right] = \mathbb{E}_{x_{j}}\left[\left(\hat{f}(x_{j},\boldsymbol{x}_{C}^{(i)})-\mu_{\hat{f}(x_{j},\boldsymbol{x}_{C}^{(i)})}\right)^{2}\right], 
    \label{eq10}
\end{equation}
where the mean of the curve is given by
\begin{equation}
    \label{eq_ice_curve_mean}
    \mu_{\hat{f}(x_{j},\boldsymbol{x}_{C}^{(i)})} = \mathbb{E}_{x_{j}}\left[ \hat{f}(x_{j},\boldsymbol{x}_{C}^{(i)}) \right].
\end{equation}
Consequently, the ICE-based importance for a specific instance is defined as the standard deviation of the curve:
\begin{equation}
    \label{eq_ice_curve_std}
    I_{\mathrm{ice}}\left(x_{j};\boldsymbol{x}_{C}^{(i)}\right) = \sqrt{\mathrm{Var}_{x_{j}}\left[\hat{f}(x_{j},\boldsymbol{x}_{C}^{(i)})\right]}.
\end{equation}
In practice, this is approximated empirically using $K$ grid points for feature $x_j$:
\begin{equation}
    \label{eq_ICE_importance}
    I_{\mathrm{ice}}\left(x_{j};\boldsymbol{x}_{C}^{(i)}\right) \approx \sqrt{\frac{1}{K-1} \sum_{k=1}^K\left(\hat{f}\left(x_{j}^{(k)},\boldsymbol{x}_{C}^{(i)}\right)-\frac{1}{K} \sum_{l=1}^K \hat{f}\left(x_{j}^{(l)},\boldsymbol{x}_{C}^{(i)}\right)\right)^2}.
\end{equation}

This yields a distribution of $I_{\mathrm{ice}}$ values corresponding to different realizations of $\boldsymbol{x}_{C}^{(i)}$. From this distribution, we derive two aggregate metrics:
\begin{equation}
    \label{eq_mean_ICE}
    \mu_{I_{\mathrm{ice}},x_{j}} = \mathbb{E}_{\boldsymbol{x}_{C}}\left[I_{\mathrm{ice}}(x_{j};\boldsymbol{x}_{C})\right] = \int_{\boldsymbol{\Omega}_{C}} I_{\mathrm{ice}}\left(x_{j};\boldsymbol{x}_{c}\right) \boldsymbol{\pi}_{\boldsymbol{x}_{C}}(\boldsymbol{x}_{c})d\boldsymbol{x}_{c},
\end{equation}
and
\begin{equation}
    \label{eq_var_ICE}
    \sigma_{I_{\mathrm{ice}},x_{j}}^2 = \mathrm{Var}_{\boldsymbol{x}_{C}}\left[I_{\mathrm{ice}}(x_{j};\boldsymbol{x}_{C})\right] = \int_{\boldsymbol{\Omega}_{C}} \left(I_{\mathrm{ice}}(x_{j};\boldsymbol{x}_{c})-\mu_{I_{\mathrm{ice}},x_{j}}\right)^{2} \boldsymbol{\pi}_{\boldsymbol{x}_{C}}(\boldsymbol{x}_{c})d\boldsymbol{x}_{c}.
\end{equation}
\textcolor{black}{In practice, these metrics are calculated using Monte Carlo simulations as follows:
\begin{equation}
\label{eq_mean_ICE_MC}
\mu_{I_{\mathrm{ice}},x_j}
\;\approx\; \hat{\mu}_{I_{\mathrm{ice}},x_{j}} =
\frac{1}{N}\sum_{i=1}^N
I_{\mathrm{ice}}\!\left(x_j;\boldsymbol{x}_C^{(i)}\right).
\end{equation}
and 
\begin{equation}
\label{eq_var_ICE_MC}
\sigma_{I_{\mathrm{ice}},x_j}^2
\;\approx\; \hat{\sigma}_{I_{\mathrm{ice}},x_j}^2=
\frac{1}{N-1}\sum_{i=1}^N
\left(
I_{\mathrm{ice}}\!\left(x_j;\boldsymbol{x}_C^{(i)}\right)
-
\hat{\mu}_{I_{\mathrm{ice}},x_j}
\right)^2,
\end{equation}
where $\{\boldsymbol{x}_C^{(i)}\}_{i=1}^N$ is sampled from $\boldsymbol{\pi}_{\boldsymbol{x}_C}$ and $N$ is the size of Monte Carlo samples. The proposed ICE-based feature importance method requires evaluating the predictive model on a grid of $K$ values for the feature of interest $x_j$ across $N$ realizations of the complementary inputs $\boldsymbol{x}_C$. As a result, the computational cost for a single feature scales as $\mathcal{O}(NK)$ model evaluations (the aggregation itself is negligible), and as $\mathcal{O}(mNK)$ when applied to all $m$ input features. This is identical to the computational cost of computing partial dependence plots, since PDPs are obtained by averaging the same set of ICE curves. To put it into perspective, the computational complexity of total Sobol' indices is $\mathcal{O}(mN)$ using the standard Saltelli estimator for all inputs~\cite{saltelli2010variance}. In this regard, the proposed method incurs a higher computational cost compared to total Sobol' indices in exchange for resolving the conditional response structure rather than solely attributing output variance.}

These two metrics serve distinct but complementary purposes. The first metric, $\mu_{I_{\mathrm{ice}},x_{j}}$, measures global importance. Unlike PDP-based importance, which averages the predictions (often leading to cancellation effects), $\mu_{I_{\mathrm{ice}},x_{j}}$ averages the magnitude of variations, thereby mitigating the issue of heterogeneous effects canceling each other out. Conversely, the second metric, $\sigma^2_{I_{\mathrm{ice}},x_{j}}$, assesses the influence of interactions. A high dispersion in $I_{\mathrm{ice}}$ indicates that the impact of $x_j$ varies significantly depending on the values of the complementary features $\boldsymbol{x}_C$, which indicates strong feature interactions. These metrics can be visualized simultaneously using a bar plot for $\mu_{I_{\mathrm{ice}},x_{j}}$ with $\sigma_{I_{\mathrm{ice}},x_{j}}$ represented as error bars.

\textcolor{black}{The key difference between the proposed $\mu_{I_{\mathrm{ice}},x_j}$ and the total Sobol' index lies in what aspect of input influence they quantify. The metric $\mu_{I_{\mathrm{ice}},x_j}$ characterizes how the input variable $x_j$ influences the model response across different conditional settings by measuring the average conditional effect of $x_j$ over all realizations of the remaining inputs. In contrast, the total Sobol' index quantifies the fraction of the output variance attributable to $x_j$, including all interaction effects with other variables. While both metrics account for interactions between $x_j$ and the remaining inputs, they do so in fundamentally different ways: $\mu_{I_{\mathrm{ice}},x_j}$ focuses on conditional response behavior, whereas the total Sobol' index focuses on variance decomposition. The two metrics can lead to different variable rankings, as illustrated by the first demonstration in Section~\ref{sec:first_demonstration}. On the other hand, $\sigma_{I_{\mathrm{ice}},x_{j}}^2$ is an additional metric that complements $\mu_{I_{\mathrm{ice}},x_j}$ by capturing how heterogeneous this effect is with respect to variations in the remaining inputs. This variability is closely related to interaction effects in the Sobol' sense. The difference is that $\sigma_{I_{\mathrm{ice}},x_{j}}^2$ quantifies the variance of each ICE curve across different conditional settings, while Sobol' interaction indices quantify the contribution of interaction terms to the overall output variance. As such, $\sigma_{I_{\mathrm{ice}},x_{j}}^2$ reflects heterogeneity in conditional response behavior, whereas Sobol' interaction indices reflect variance attribution within a functional ANOVA decomposition. }

Equation~\eqref{eq_ice_curve_mean} defines the mean of the ICE curve for instance \(\boldsymbol{x}_{C}^{(i)}\), and Eq.~\eqref{eq_ice_curve_std} is its standard deviation. When we then average and compute the spread of \(I_{\mathrm{ice}}(x_j;\boldsymbol{x}_C^{(i)})\) across instances, we obtain interpretable quantities in the same units as the model output. One could instead use the conditional variance itself, without taking a square root, as the instance-wise importance measure. That choice has a formal appeal: by the law of total variance,
\(
\mathrm{Var}(y)
= \mathrm{Var}_{x_j}\bigl(\mathbb{E}[y \mid x_j]\bigr)
+ \mathbb{E}_{x_j}\bigl[\mathrm{Var}(y \mid x_j)\bigr],
\)
and in classical Sobol' analysis (assuming input independence), one identifies the first-order component with
\(\mathrm{Var}_{x_j}(\mathbb{E}[y\mid x_j])\) and the complementary term \(\mathbb{E}[\mathrm{Var}(y\mid x_j)]\) as the residual / interaction-plus component. This decomposition underlies how Sobol' indices are defined~\cite{iooss2019shapley}.
By contrast, standard deviations are more interpretable (same units as \(y\)), but do not decompose additively in the same way. Thus, the two metrics are complementary, and we recommend using the variance for variance decomposition and additivity, while the standard deviation is of better use for interpretability in the original unit of model output. 

\textcolor{black}{The pseudocode for the calculation of ICE-based feature importance is shown in Algorithm~\ref{alg:ICE_FI}. Note that the calculation of the mean ICE-based importance $\mu_{I_{\mathrm{ice}},x_j}$ and variance of ICE-based importance $I_{\mathrm{ice}}\left(x_{j};\boldsymbol{x}_{C}^{(i)}\right)$ are done by using Monte Carlo simulation. }
\begin{algorithm}
\caption{\textcolor{black}{ICE-based Feature Importance for a variable $x_j$}}
\begin{algorithmic}
\Require Trained model $\hat f(\boldsymbol{x})$, feature of interest $x_j$, $K$ grid points $\{x_j^{(k)}\}_{k=1}^K$
\Ensure $\mu_{I_{\mathrm{ice}},x_j}$, $\sigma^2_{I_{\mathrm{ice}},x_j}$

\State Sample $\{\boldsymbol{x}_C^{(i)}\}_{i=1}^N$ from $\boldsymbol{\pi}_{\boldsymbol{x}_C}$ 

\For{$i = 1,\dots,N$}
\State Evaluate the $i$-th ICE curve $\hat{f}(x_{j},\boldsymbol{x}_{C}^{(i)})$
    \State Estimate the standard deviation of the $i$-th ICE curve $I_{\mathrm{ice}}\left(x_{j};\boldsymbol{x}_{C}^{(i)}\right)$ using Eq.~(\ref{eq_ICE_importance})
\EndFor

\State Estimate mean ICE-based importance $\mu_{I_{\mathrm{ice}},x_j}$ using Eq.~(\ref{eq_mean_ICE})
\State Estimate variance of ICE-based importance $\sigma^2_{I_{\mathrm{ice}},x_j}$ using Eq.~(\ref{eq_var_ICE})
\end{algorithmic}
\label{alg:ICE_FI}
\end{algorithm}


\color{black}
Recent work has established a rigorous connection between conditional variability summaries and optimal-transport–based GSA~\cite{borgonovo2025global,chiani2025global}. 
Here we treat \(x_j\) as the feature and \(x_j^{(i)}\) as a specific sampled value of that feature. Denote the unconditional and conditional output distributions by \(\mathcal{L}(y)\) and \(\mathcal{L}(y \mid x_j = x_j^{(i)})\), respectively. 
Define the unconditional mean and covariance as $\mu_y = \mathbb{E}[y]$ and  
$\Sigma_y = \mathrm{Var}(y)$, respectively.
Also, denote the conditional mean and covariance (pointwise in \(x_j\)) as $
\mu_{y \mid x_j} = \mathbb{E}[y \mid x_j]$ and  
$\Sigma_{y \mid x_j} = \mathrm{Var}(y \mid x_j)$, respectively. Chiani \textit{et al.}~\cite{chiani2025global} propose a sensitivity index based on the squared 2–Wasserstein distance, \(W_2^2\), between \(\mathcal{L}(y)\) and \(\mathcal{L}(y \mid x_j)\). Under Gaussian or elliptical assumptions, this distance admits the closed-form Wasserstein–Bures decomposition
\begin{equation}
W_2^2\big(\mathcal{L}(y), \mathcal{L}(y \mid x_j)\big)
= \underbrace{\| \mu_y - \mu_{y \mid x_j} \|_2^2}_{\text{mean shift}}
+ \underbrace{\mathrm{Tr}\!\left(\Sigma_y + \Sigma_{y \mid x_j} - 2 \big(\Sigma_y^{1/2} \Sigma_{y \mid x_j} \Sigma_y^{1/2}\big)^{1/2}\right)}_{\text{covariance shift}}.
\end{equation}

This decomposition highlights two distinct sources of feature importance:
\begin{enumerate}
    \item The shift in the expected value (first term), which corresponds to the traditional variance-based sensitivity measures~\cite{lamboni2011multivariate,gamboa2014sensitivity}.
    \item The change in the distribution's shape or dispersion (second term), which captures effects on the output variance and covariance structure.
\end{enumerate}

\color{black}
Our proposed ICE-based metrics are conceptually related to the Wasserstein–Bures framework in that they distinguish between average effects and variability effects; however, the correspondence is not algebraic in general. 
In the Wasserstein–Bures decomposition, the covariance shift term is defined through the conditional covariance \(\Sigma_{y \mid x_j}\), that is, the variability of \(y\) across the complementary inputs \(\boldsymbol{x}_C\) for a fixed \(x_j\). 
In contrast, the ICE-based dispersion quantifies the variability of \(\hat f(x_j, \boldsymbol{x}_C^{(i)})\) across different values of \(x_j\) for a fixed realization \(\boldsymbol{x}_C^{(i)}\). 
Thus, the two metrics probe variability along different conditional directions: \(\Sigma_{y \mid x_j}\) looks across \(\boldsymbol{x}_C\) at fixed \(x_j\), whereas ICE-based dispersion looks across \(x_j\) at fixed \(\boldsymbol{x}_C^{(i)}\).
The ICE-based dispersion should therefore be interpreted as a complementary measure of response heterogeneity rather than a direct estimator of the Wasserstein covariance shift. 
Under additional structural assumptions (such as additive or separable models, or finite orthonormal expansions like PCE), these perspectives can be related. 
A direct Wasserstein-based dispersion measure would involve the conditional standard deviation 
\(\sigma_{y \mid x_j} = \sqrt{\mathrm{Var}(y \mid x_j)}\), 
which quantifies variability across \(\boldsymbol{x}_C\) for a fixed \(x_j\), 
whereas ICE-based dispersion captures the variability of \(\hat f(x_j, \boldsymbol{x}_C^{(i)})\) 
across different values of \(x_j\) for a fixed realization \(\boldsymbol{x}_C^{(i)}\).

\color{black}
Note that the variability captured by $I_{\mathrm{ice}}\left(x_{j};\boldsymbol{x}_{C}^{(i)}\right)$ will always account for more localized variations specific to $\boldsymbol{x}_{C}^{(i)}$, which are not captured by the globally averaged $\hat{f}_{pdp}(x_j)$. As a result, we demonstrate, in a certain setting, that:
\begin{equation}
    \label{eq:ICE_ineq}
    \mathbb{E}_{\boldsymbol{x}_C}\left[ I_{\mathrm{ice}}(x_j; \boldsymbol{x}_C) \right] \ge I_{pdp}(x_j).
    \end{equation}
More precisely, we demonstrate that the inequality holds for several classes of functions, most notably for those expressible as a finite expansion of a linear combination of bases and functions with some combination of additive and multiplicative separability, as detailed in Appendix A. Since the PCE surrogate model itself constitutes a finite expansion of such bases, the inequality is satisfied when PCE is employed. 

\textcolor{black}{The inequality in Eq.~(\ref{eq:ICE_ineq}) may appear analogous to the well-known inequality in the Sobol' index, namely that the total Sobol' index is always greater than or equal to the first-order Sobol' index. However, the two results arise from fundamentally different mechanisms. The Sobol' inequality follows directly from the variance-based decomposition, whereby the total index aggregates the main effect and all interaction effects in an additive manner. In contrast, the ICE-based inequality reflects the fact that averaging conditional response variations preserves heterogeneous effects that may cancel out when the model response is averaged first, as in the PDP.}

\subsection{ICE-based correlation values}


Beyond quantifying GSA, it is also essential to assess how interactions modify the relationship between inputs and predictions. This enables the identification of variables that change from positive to negative associations when interacting with others. To complement the ICE-based feature importance, we introduce the {ICE-based correlation values} which quantify the correlation between ICE curves and the partial dependence function:
\begin{equation}
\label{eq_ice_corr}
\rho(x_{j},{\boldsymbol{x}_{C}^{(i)}})=\text{Corr}(\hat{f}(x_{j},\boldsymbol{x}_{C}^{(i)}),\hat{f}_{pdp}(x_{j})),
\end{equation}
where $\text{Corr}$ denotes the correlation.  That is, we compute the correlation between the ICE curve $\hat{f}(x_{j},\boldsymbol{x}_{C}^{(i)})$ for a specific instance $\boldsymbol{x}_{C}^{(i)}$  and the corresponding partial dependence function $\hat{f}_{pdp}(x_{j})$ for the variable $x_j$. \textcolor{black}{In this paper, we measure the linear relationship between ICE curves and PDP using the Pearson correlation coefficient to ease interpretation. Alternative methods, including Spearman correlation, could be used to detect monotonic dependencies.}

From this, we define the following quantity:
\begin{equation}
\label{eq_ice_corr_std}
    \sigma_{\rho_{x_{j}}} = \sqrt{\mathbb{E}_{\boldsymbol{x}_{C}}\left[\left(\rho(x_{j},\boldsymbol{x}_{C}) - \mathbb{E}_{\boldsymbol{x_{C}}}\left[\rho(x_{j},\boldsymbol{x}_{C})\right]\right)^2\right]},
\end{equation}
which is essentially the standard deviation of the ICE correlation values, and can be computed from a Monte Carlo simulation. If $\sigma_{\rho}=0$, this indicates that the partial dependence function is similar to ICE curves (in other words, there is no interaction). Conversely, a high value of $\sigma_{\rho}$ suggests that most of the ICE curves differ significantly from the partial dependence line, indicating a strong interaction effect on the trend. Additionally, showing the boxplot or \textcolor{black}{violin plot} of {ICE-based correlation values} also helps in the assessment. \textcolor{black}{It is worth noting that, although Pearson correlation is used, the metric $\sigma_{\rho_{x_j}}$ is capable of detecting nonlinear interactions in terms of trend shifts, as it quantifies the variability in correlation between the PDP and multiple ICE curves.}

\textcolor{black}{The numerical procedure is shown in Algorithm~\ref{Alg:ICE_correlation}. The computation of the ICE-based correlation metric in Eq.~(\ref{eq_ice_corr}) requires evaluating the Pearson correlation between an ICE curve and the corresponding PDP, both discretised on a grid of $K$ points. Computing a single correlation therefore scales as $\mathcal{O}(K)$. Repeating this procedure across $N$ realizations of the complementary inputs $\boldsymbol{x}_C$ results in an overall complexity of $\mathcal{O}(NK)$ per input variable. The subsequent computation of the mean and standard deviation in Eq.~(\ref{eq_ice_corr_std}) incurs only $\mathcal{O}(N)$ additional cost and is negligible in comparison. When applied to all $m$ input input variables, the total computational complexity scales as $\mathcal{O}(mNK)$.}

\begin{algorithm}
\caption{\textcolor{black}{ICE-based correlation values for a variable $x_j$}}
\begin{algorithmic}
\Require Trained model $\hat f(\boldsymbol{x})$, input variable of interest $x_j$, $K$ grid points $\{x_j^{(k)}\}_{k=1}^K$, partial dependence function $\hat{f}_{pdp}(x_j)$
\Ensure $\sigma_{\rho_{x_{j}}}$

\State Sample $\{\boldsymbol{x}_C^{(i)}\}_{i=1}^N$ from $\boldsymbol{\pi}_{\boldsymbol{x}_C}$ 

\For{$i = 1,\dots,N$}
\State Evaluate the $i$-th ICE curve $\hat{f}(x_{j},\boldsymbol{x}_{C}^{(i)})$
\State Compute the correlation between $\hat{f}(x_{j},\boldsymbol{x}_{C}^{(i)})$ and $\hat{f}_{pdp}(x_j)$, i.e., $\rho(x_{j},{\boldsymbol{x}_{C}^{(i)}})$ from Eq.~(\ref{eq_ice_corr})
\EndFor

\State Compute the standard deviation of ICE correlation values $\sigma_{\rho_{x_{j}}}$ from Eq.~(\ref{eq_ice_corr_std})
\end{algorithmic}
\label{Alg:ICE_correlation}
\end{algorithm}
\textcolor{black}{It is important to note that the assessment of the ICE-based correlation value is meaningful only when the partial dependence function exhibits non-zero variance. In cases where the PDP is flat, indicating the absence of a main effect of the corresponding input variable, the Pearson correlation is mathematically undefined due to zero variance. However, in practical applications, a PDP that is theoretically flat may exhibit small but non-zero variance due to the numerical averaging of heterogeneous ICE curves or surrogate approximation errors. To avoid spurious interpretations, the proposed ICE-based correlation metrics are interpreted only when the following criterion is satisfied:
\begin{equation}
    \label{eq:ICE_corr_criterion}
    \frac{I_{pdp}}{\mu_{I_{ice}}} < \varepsilon,
\end{equation}
where $\varepsilon$ is the threshold criterion. Since $I_{pdp}$ quantifies the importance according to PDP, while $\mu_{I_{ice}}$ captures the average conditional variability across ICE curves, this criterion identifies interaction-dominated variables even when small numerical PDP variations are present. In such cases, ICE-based correlation metrics are not interpreted. Regardless, when this condition is satisfied, it indicates a substantial shift in the conditional trends by which the input variable influences the output, warranting a closer examination of the corresponding ICE curves. In this paper, we use $\varepsilon=5 \times10^{-3}$, such that the PDP-based importance must be less than 0.5\% of the average ICE-based importance.}


\textcolor{black}{\subsection{Evaluation workflow}
In practice, the proposed sensitivity analysis framework is applied through a combination of visual diagnostics and quantitative ICE-based metrics. As a prerequisite, the model response function must first be available, either in the form of the original deterministic model or through a sufficiently accurate surrogate model. This requirement is essential, since sensitivity measures derived from a surrogate that does not adequately reproduce the underlying system behavior may be misleading and unreliable~\cite{razavi2015we}.}
%


\textcolor{black}{In this paper, we use the coefficient of determination, $R^2$,  to assess the accuracy of the surrogate model:
\begin{equation}
R^2
= 1 - \frac{\sum_{i=1}^{n_v} \left(f(\boldsymbol{x}^{(i)}) - \hat{f}(\boldsymbol{x}^{(i)})\right)^2}
{\sum_{i=1}^{n_v} \left(f(\boldsymbol{x}^{(i)}) - \bar{f}\right)^2},
\end{equation}
where $\bar{f} = \frac{1}{n_v}\sum_{i=1}^{n_v} f(\boldsymbol{x}^{(i)})$ and $n_v$ is the size of validation samples. The use of a surrogate model introduces additional uncertainty into the computed GSA metrics due to surrogate approximation error. Quantifying and propagating this uncertainty is beyond the scope of the present work and will be addressed in future studies. In the present analysis, we therefore require the surrogate model to be sufficiently accurate so that the extracted sensitivity information remains faithful to the underlying response behavior. To this end, we require the $R^2$ to exceed 0.95 to ensure a faithful representation of the extracted knowledge~\cite{marrel2024probabilistic}. }

\textcolor{black}{The analysis begins with visualization by inspecting PDPs and the corresponding ICE curves to obtain an initial qualitative understanding of global trends and conditional behavior. Subsequently, the ICE-based importance metric $\mu_{I_{\mathrm{ice}},x_j}$ is computed to assess the overall importance of each input variable in terms of the average magnitude of its conditional effect. The variability metric $\sigma^2_{I_{\mathrm{ice}},x_j}$ is then analyzed to quantify heterogeneity across conditional settings; a large value of $\sigma^2_{I_{\mathrm{ice}},x_j}$ indicates that the influence of $x_j$ strongly depends on the remaining inputs and suggests the presence of interaction effects. In such cases, the ICE curves are revisited to examine how the interaction manifests across different realizations of the complementary variables. Finally, the correlation between individual ICE curves and the corresponding PDP is evaluated to assess whether the average PDP is representative of the conditional response behavior. 
}

\textcolor{black}{It should be noted that $\sigma^2_{I_{ice}}$ does not directly identify which input variables an input $x_j$ strongly interacts with. Such an interaction structure can be explored qualitatively through visualisation, for example, by plotting ICE curves coloured by another input variable, where clear trends indicate potential interactions. However, these visual diagnostics do not provide a quantitative measure of interaction magnitude. In such cases, variance-based interaction metrics, such as second-order Sobol' indices, can be employed to quantify pairwise interaction effects explicitly. Subsequently, the ICE curves are colour-coded according to the variable with which strong interactions are observed. Therefore, the proposed ICE-based metrics are complementary to other metrics, such as Sobol' indices, in which the former focuses on behavioral insight from conditional response analysis.}

\textcolor{black}{We emphasize that the primary focus of this work is the development of sensitivity metrics themselves, rather than uncertainty quantification of surrogate models. In this context, surrogate models are employed as computational enablers to emulate expensive simulations and to facilitate the application of the proposed sensitivity analysis framework. As such, the proposed metrics are applicable to any function or a surrogate model. When surrogate models are used, additional sources of uncertainty associated with model approximation and training data inevitably arise. While quantifying and propagating these surrogate-induced uncertainties is an important topic, it is beyond the scope of the present study. In this work, we ensure that sufficiently accurate surrogate models are available, allowing us to focus on extracting reliable sensitivity insights from the learned response. In practical settings, surrogate uncertainty can be assessed using established techniques such as bootstrapping, which can be naturally integrated with the proposed framework in future extensions.
}

\subsection{First demonstration}
\label{sec:first_demonstration}
\textcolor{black}{The aim of this first demonstration is to demonstrate the proposed evaluation workflow, highlighting how visualization and ICE-based metrics are used together on a simple problem.} At this stage, no surrogate model is used; instead, the methods are applied directly to the actual function. We conducted a comparison with {first-order, second-order, and total order Sobol'} indices, and averaged SHAP (referred to as $S_{F}$, $S_{T}$, and $\bar{Sh}$, respectively). For the analysis of two-way interactions, we compared the corresponding metrics for two-way interactions, denoted as $I_{pdp,ij}$, $S_{ij}$, and $\bar{Sh}_{ij}$, where $i$ and $j$ represent the indices of the variables, with $i \neq j$. 


The three-variable analytical problem is adopted from Goldstein et. al.~\cite{goldstein2015peeking}, written as:
\begin{equation}
 y=0.2 x_1-5 x_2+10 x_2 \mathds{1}_{x_3 \geq 0}, 
\end{equation}
where $x_{i} \in [-1,1]$ for $i=1,2,3$.
The main challenge with this problem is that $x_{2}$ strongly interacts with $x_{3}$ in such a way that the PDP yields a flat curve (therefore, the $I_{pdp}$ is zero or close to zero). We did not perform a thorough analysis of this problem; instead, we use this problem to check the adequacy of the new metrics ({\textit{i.e.}}, $\mu_{I_{ice}}$, $\sigma_{I_{ice}}$, and $\sigma_{\rho}$). The {Sobol'} indices and averaged SHAP were calculated using $10^{6}$ samples. Due to the low dimensionality of the problem, the SHAP values are calculated analytically. To calculate the PDP and ICE-based metrics, 10,000 ICE curves were drawn from the actual function {but the SMT-Explainability toolbox allows us to do the same based on a surrogate model~\cite{saves2023smt,robani2025smt}.}

The anchored PDP and ICE plots for this problem are shown in Fig.~\ref{fig:threevar_PDP}. \textcolor{black}{In this figure, as well as in similar figures throughout the paper, the one-dimensional PDP is represented by a dashed line, whereas the corresponding ICE curves are shown as solid lines.} It is evident that the influence of $x_{2}$ on $y$ varies depending on the value of $x_{3}$, and vice versa. As a result, the PDPs for $x_{2}$ and $x_{3}$ appear as almost flat lines, leading to a near-zero PDP-based input importance for both variables. However, this is misleading, as the true importance of these variables is not zero. On the other hand, $x_{1}$ does not interact with $x_{2}$ and $x_{3}$, which is why its PDP and ICE curves are essentially identical. Moreover, its overall impact is actually less significant than that of $x_{2}$ and $x_{3}$.

\begin{figure}[hbt!]
	\centering
	\begin{subfigure}{.32\columnwidth}
		\includegraphics[width=1\columnwidth]{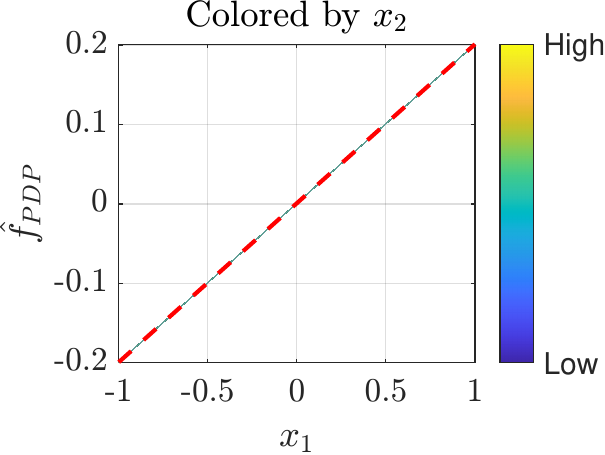}%
		\label{fig:threevar_PDP_X1_with_x2.eps}
    \caption{$x_{1}$}
    \end{subfigure}
	\begin{subfigure}{.32\columnwidth}
		\includegraphics[width=1\columnwidth]{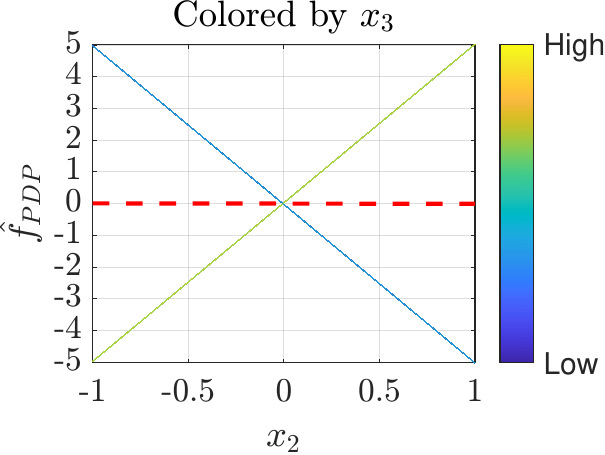}%
		\caption{$x_{2}$}%
		\label{fig:threevar_PDP_X2_with_x3.eps}
    \end{subfigure}
	\begin{subfigure}{.32\columnwidth}
		\includegraphics[width=1\columnwidth]{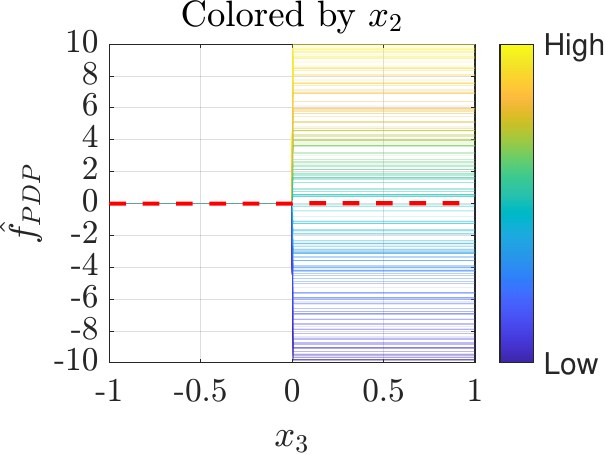}%
		\caption{$x_{3}$}%
		\label{fig:threevar_PDP_X3_with_x2.eps}
    \end{subfigure}
        \caption{PDP and ICE plots for the three-variable problem. }
	\label{fig:threevar_PDP}
\end{figure}

The complete GSA results are presented in Table~\ref{tbl:result_3var_GSA}. The first-order {Sobol'} indices are very small for all variables (they should essentially be zero for $x_2$ and $x_3$ due to their flat PDPs). However, the total {order Sobol'} indices are close to 1 for $x_{2}$ and $x_{3}$, suggesting that their only contribution to the total variance comes from the interaction between $x_{2}$ and $x_{3}$. As anticipated, $I_{pdp}$ does not suggest that $x_{2}$ and $x_{3}$ are more significant than $x_{1}$, due to the previously mentioned challenge. In contrast, Averaged SHAP accurately identifies the importance of $x_{2}$ and $x_{3}$. Unlike $I_{pdp}$, $\bar{Sh}$ is based on the impact of each input on the model's predictions and is capable of detecting subtle yet crucial relationships that $I_{pdp}$ might overlook. It can also be observed that ICE-based feature importance yields finite and reasonable values for $x_{2}$ and $x_{3}$. Notably, $\mu_{I_{\mathrm{ice}}}$ effectively identifies the importance of $x_{2}$ and $x_{3}$ by focusing on the importance of each ICE curve. Thus, it can be concluded that $\mu_{I_{\mathrm{ice}}}$ addresses the main limitation of $I_{pdp}$. The $\sigma_{I_{\mathrm{ice}}}$ value that equals 0 for $x_{1}$ is the consequence of its non-interacting nature (note that we analyze the standard deviation instead of the variance of $I_{\mathrm{ice}}$). However, $\sigma_{I_{\mathrm{ice}}}=0$ can indicate either (i) a genuinely non-interacting additive input, as in the case of $x_1$, or (ii) the presence of interactions that alter curve shape (\textit{e.g.}, flip sign or slope) while preserving the integrated spread. Hence, in this second case, visual inspection of ICE curves or slope-based diagnostics is required to disambiguate these cases, especially when relying on surrogate models.
\textcolor{black}{Indeed, for the case of $x_{2}$, this means that the importance magnitude of each ICE curve of $x_{2}$ is the same, although the gradient shifts depending on how it interacts with $x_{3}$. Distinguishing between these two scenarios requires an explicit examination of interaction metrics, which are not captured by $\sigma_{I_{\mathrm{ice}}}$. In fact, as shown in Table~\ref{tbl:result_3var_GSA_int}
, $x_2$ is found to strongly interact with $x_3$. Conversely, the table also shows that $x_1$ does not interact with other variables.} Lastly, the finite value of $\sigma_{I_{\mathrm{ice}}}$ for $x_{3}$ is reasonable because, as can be observed in Fig.~\ref{fig:threevar_PDP_X3_with_x2.eps}, each ICE curve yields a different $I_{\mathrm{ice}}$.

\textcolor{black}{It is worth discussing the different rankings produced by the total Sobol' index $S_T$ and the proposed ICE-based importance metric $\mu_{I_{\mathrm{ice}}}$, as illustrated in this problem. The total Sobol' is a GSA index that assigns nearly identical importance to $x_2$ and $x_3$, reflecting the fact that the dominant contribution to the output variance arises from their interaction, which is fully attributed to both variables in the total-effect formulation. 
In contrast, $\mu_{I_{\mathrm{ice}}}$ is an averaged local index that ranks $x_2$ as more important than $x_3$, since it measures the mean conditional variability along ICE curves and therefore captures how strongly each variable influences the model response across different conditional settings. This example highlights that while $S_T$ quantifies variance contribution, $\mu_{I_{\mathrm{ice}}}$ characterizes conditional response behavior, leading to different but complementary importance rankings.}

\textcolor{black}{Lastly, $\sigma_{\rho}$ is not a measure of GSA but a change in the trend of ICE curves with respect to the PDP. The fact that $\sigma_{\rho}=0$ for $x_{1}$ suggests that $x_{1}$ does not interact with other variables. Furthermore, the values of $\sigma_{\rho}$ are not calculated for $x_{2}$ and $x_{3}$ because it does not satisfy the criterion mentioned in Eq.~(\ref{eq:ICE_corr_criterion}). In particular, this signals a flat PDP and a very strong interaction. Indeed, as can be seen in Fig.~\ref{fig:threevar_PDP}, the PDPs for $x_2$ and $x_3$ are essentially flat, while the ICE curves indicate a significant change of trends.}

\begin{table}[hbt!]
\centering
\caption{GSA Results for the 3-variable problem.}
\label{tbl:result_3var_GSA}
\begin{tabular}{|c|c|c|c|}
\hline
\textbf{Variable} & $x_{1}$ & $x_{2}$ & $x_{3}$ \\
\hline
$S_{F}$ & 0.001 & 0.004 & 0.001\\
$S_{T}$ & 0.003 & 0.997 & 0.998 \\
$\bar{Sh}$ & 0.552 & 1.750 & 0.750 \\
$I_{pdp}$  & 0.116 & 0.003 & 0.010 \\
$\mu_{I_{\mathrm{ice}}}$ & 0.116 & 2.908 & 2.505 \\
$\sigma_{I_{\mathrm{ice}}}$ & 0.000 & 0.000 & 1.449 \\
$\sigma_{\rho}$ & 0.000 & - & - \\
\hline
\end{tabular}
\end{table}


The two-way GSA results are presented in Table~\ref{tbl:result_3var_GSA_int}. It is noteworthy that all metrics (namely, two-way $I_{pdp}$, two-way $\bar{Sh}_{ij}$, and second-order {Sobol'} indices) successfully identify the interaction between $x_{2}$ and $x_{3}$. {It is an expected {behavior} as Second-order Sobol'~indices quantify how much the combined interaction of two input variables contributes to the total output variance, over and above the sum of their individual effects.}
In the case of $\mu_{I_{\mathrm{ice}}}$, this interaction information is incorporated into $\mu_{I_{\mathrm{ice}}}$ and complements it. Specifically, $\mu_{I_{\mathrm{ice}}}$ and its corresponding $\sigma_{I_{\mathrm{ice}}}$ reflect the overall impact of the interaction, while $I_{pdp,ij}$ quantifies the influence of specific interactions.

\begin{table}[hbt!]
\centering
\caption{Two-way GSA Results for the 3-variable problem.}
\label{tbl:result_3var_GSA_int}
\begin{tabular}{|c|c|c|c|}
\hline
\textbf{Variable} & $x_{1}-x_{2}$ & $x_{1}-x_{3}$ & $x_{2}-x_{3}$ \\ \hline
$I_{pdp,ij}$ & 0.000 & 0.000 & 0.771 \\
$S_{ij}$ & 0.004 & 0.004 & 0.993 \\
$\bar{Sh}_{ij}$ & 0.000 & 0.000 & 1.246 \\
\hline
\hline
\end{tabular}
\end{table}

\section{Demonstration on analytical and engineering problems}
Metrics for analytical problems were evaluated on the original function, while surrogate models were used for engineering cases where the true function was unknown. For all problems, we conducted a comparison with other sensitivity metrics (\textit{e.g.}, {Sobol'} indices and averaged SHAP). 

\subsection{5-variable Friedman function}

We first demonstrated and compared the capabilities of ICE and PDP on the 5-variable Friedman function~\cite{friedman1991multivariate}, written as:
\begin{equation}
    f(\boldsymbol{x}) = 10\text{ sin}(\pi x_{1}x_{2}) + 20 (x_{3}-0.5)^{2}+10x_{4}-5x_{5}
\end{equation}
where $x_{i} \in [0,1]$, for $i=1,\ldots,5$. The equation shows interaction only between $x_1$ and $x_2$; thus, explainability methods should be able to correctly capture this relationship.

\subsubsection{Global sensitivity analysis}
First, let us compute the GSA values from {Sobol'} indices, averaged SHAP, PDP, and ICE.  The comparison of various GSA metrics is presented in Table~\ref{tbl:result_GSA_friedman}. Notably, all metrics produce the same ranking of importance, with the most significant variable being $x_{4}$ followed by $x_{1}$ and $x_{2}$, with $x_{3}$ and $x_{5}$ are considered to be less significant. Interestingly, $I_{pdp}$ and $\mu_{I_{\mathrm{ice}}}$ provide identical importance values for $x_{3}$, $x_{4}$, and $x_{5}$. This occurs because the centered ICE curves for variables without interactions are essentially the PDP itself, leading to the mean of $I_{\mathrm{ice}}$ ({\textit{i.e.}}, $\mu_{I_{\mathrm{ice}}}$) being identical to $I_{pdp}$. Conversely, the $\mu_{I_{\mathrm{ice}}}$ values for $x_{1}$ and $x_{2}$ are higher than their $I_{pdp}$ counterparts, which is due to the interaction effects considered in the calculation of $\mu_{I_{\mathrm{ice}}}$. In this regard, if the value $I_{pdp}$ equals $\mu_{I_{\mathrm{ice}}}$, this signals that the interaction is not present. Conversely, when $\mu_{I_{\mathrm{ice}}} > I_{pdp}$, this signals that the interaction term is present since this means that the ICE curves are not exactly similar to the PDP in terms of the trend and impact. Note that this approach is similar to interpreting first-order and total {order Sobol'} indices. It is also evident in this problem that the values of \(S_{T}\) and \(S_{F}\) are identical for \(x_{3}\), \(x_{4}\), and \(x_{5}\), while \(S_{T}\) for \(x_{1}\) and \(x_{2}\) are higher than \(S_{F}\) for \(x_{1}\) and \(x_{2}\).

\begin{table}[hbt!]
\centering
\caption{GSA results for the {F}riedman problem.}
\label{tbl:result_GSA_friedman}
\begin{tabular}{|c|c|c|c|c|c|}
\hline
\textbf{Variable} & $x_{1}$ & $x_{2}$ & $x_{3}$ & $x_{4}$ & $x_{5}$\\
\hline
$S_{F}$ & 0.195 & 0.198 & 0.091 & 0.348 & 0.086 \\
$S_{T}$ & 0.272 & 0.271 & 0.091 & 0.348 & 0.086 \\
$\bar{Sh}$ & 2.278 &   2.278  &  1.707 &    2.503 &    1.256 \\
$I_{pdp}$ &  2.176&  2.175&   1.509&   2.908&    1.454 \\
$\mu_{I_{\mathrm{ice}}}$ &  2.415&  2.422 &   1.509&   2.908&    1.454 \\ \hline
$\sigma_{I_{\mathrm{ice}}}$ & 0.856 &  0.852  & 0.000 &  0.000 &  0.000 \\
$\sigma_{\rho}$ & 0.110 & 0.114 & 0.000 & 0.000 & 0.000 \\
\hline
\end{tabular}
\end{table}

\begin{table}[hbt!]
\centering
\caption{Two-way GSA results for the Friedman problem. The other interactions yield essentially zero importance.}
\label{tbl:result_GSA_friedman_int}
\begin{tabular}{|c|c|c|c|}
\hline
\textbf{Variable} & $x_{1}-x_{2}$ & $x_{1}-x_{4}$ & $x_{2}-x_{4}$ \\ \hline
$I_{pdp,ij}$ & 0.995 & 0.000 & 0.000 \\
$\bar{Sh}_{ij}$ & 0.585 & 0.000 & 0.000 \\
$S_{ij}$ & 0.076 & 0.001 & 0.001 \\
\hline
\hline
\end{tabular}
\end{table}

The barplot of $I_{pdp}$, $\mu_{I_{\mathrm{ice}}}$, and $\bar{Sh}$ for the {F}riedman regression problem is further shown in Fig.~\ref{fig:friedman_GSA}. Here, it can be seen that $\mu_{I_{\mathrm{ice}}}$, with the errorbar showing $\mu_{I_{\mathrm{ice}}} \pm \sigma_{I_{\mathrm{ice}}}$ provides more information than $I_{pdp}$ and $\bar{Sh}$. Again, a large vertical bar indicates a stronger effect of interactions, as evidenced in $x_{1}$ and $x_{2}$, but non-existent in other variables. Further, Fig.~\ref{fig:friedman_COR} depicts the violin plot of the correlation between the ICE curves and PDP.  For $x_{1}$ and $x_{2}$, the overall correlations are positive, but some ICE curves deviate significantly from the PDP (indicating strong interaction), particularly those with lower correlation values. The non-zero values of $\sigma_{\rho}$ also signify that the effect of the interaction changes the trend of the ICE curves. It is worth noting that a non-zero value of the correlation between the ICE curve and PDP indicates a change in nonlinearity or trend. The $\sigma_{\rho}$ value for $x_{3}$, $x_{4}$, and $x_{5}$ is exactly zero because the trend for the ICE curve is exactly similar to its PDP. Let us now analyze the interaction GSA-values as shown in Table~\ref{tbl:result_GSA_friedman_int}. The two-way PDP feature importance, SHAP, and second-order {Sobol'} indices (see Table~\ref{tbl:result_GSA_friedman_int}). further confirm the presence of the interaction between $x_{1}$ and $x_{2}$. {In this regard, Second-order Sobol' indices quantify the interaction effects between pairs of input variables on the output variance, while interaction SHAP values measure the specific contribution of interactions between inputs to a model's prediction.} It was observed that the estimated interaction metrics are zero or nearly zero for all cross terms but $x_{1}-x_{2}$. Note that this information should be digested together with an analysis of the visualization of PDP and ICE, as shown next.

\begin{figure}[hbt!]
	\centering
	\begin{subfigure}{.45\columnwidth}
		\includegraphics[width=1\columnwidth]{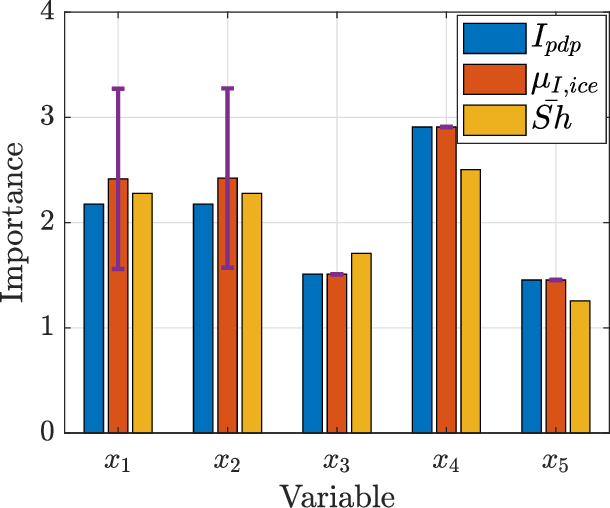}%
      \caption{Barplot of $I_{pdp}$, $\mu_{I_{\mathrm{ice}}}$, and $\bar{Sh}$.}
		\label{fig:friedman_GSA}
    \end{subfigure}
	\begin{subfigure}{.485\columnwidth}
		\includegraphics[width=1\columnwidth]{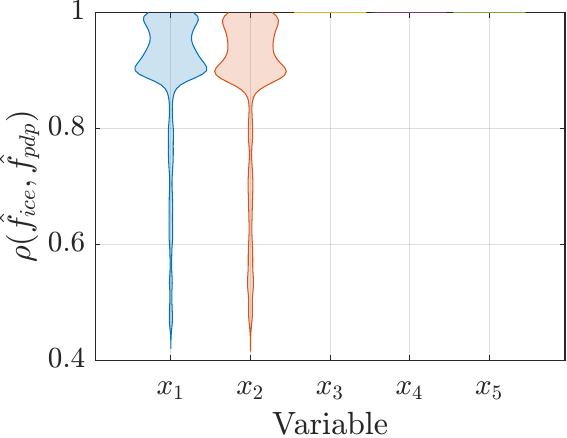}%
      \caption{\textcolor{black}{Violin plot of {ICE-based correlation values}}.}
		\label{fig:friedman_COR}
    \end{subfigure}
        \caption{Barplot of some GSA metrics and \textcolor{black}{violin plot} of ICE correlation values for the Friedman regression problem.}
	\label{fig:friedman_GSA_plot}
\end{figure}

\subsubsection{Visualization}

The first variable of interest is \( x_{1} \), with the centered PDP/ICE and SHAP plots shown in Figs.~\ref{fig:explainability_PDP_friedman_x1} and~\ref{fig:explainability_SHAP_friedman_x1}, respectively. The plots for \( x_{2} \) are similar, as both variables appear only in the term \( 10 \sin(\pi x_{1}x_{2}) \). The curves are colored by \( x_{2} \) and \( x_{4} \), where \( x_{1} \) interacts solely with \( x_{2} \). Notably, the centered PDP/ICE and SHAP plots reveal distinct insights: the effect of \( x_{1} \) varies with the value of \( x_{2} \), reflecting their interaction. For instance, when $x_{2}$ is small, the impact of $x_{1}$ is minimal, resulting in an almost flat ICE curve. However, as $x_{2}$ increases, the impact of $x_{1}$ becomes more significant, with the overall trend first increasing and then decreasing (which is due to the sinusoidal term). This effect is quantified and shown by the non-zero values of $\sigma_{I_{\mathrm{ice}}}$ and $\sigma_{\rho}$ discussed earlier. Further, it can be seen that the coloring seems to be disordered when the ICE curves for $x_{1}$ are color-coded according to $x_{4}$; which indicates that there is no interaction between how the change in $x_{4}$ affects $x_{1}$. 

\begin{figure}[hbt!]
	\centering
	\begin{subfigure}{.45\columnwidth}
		\includegraphics[width=1\columnwidth]{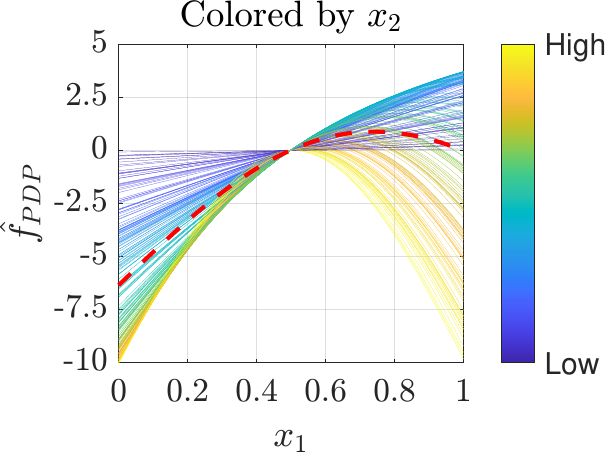}%
		\caption{PDP and ICE}%
		\label{fig:SHAP_plot_Vr_h_x1_x2}
	\end{subfigure}\hfill%
 	\begin{subfigure}{.45\columnwidth}
		\includegraphics[width=1\columnwidth]{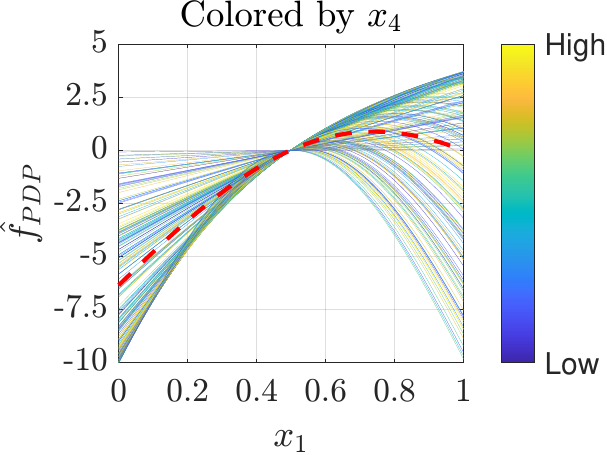}%
		\caption{PDP and ICE}%
		\label{fig:SHAP_plot_Vr_h_x1_x4}
	\end{subfigure}\hfill%
    \caption{PDP and ICE plots of $x_{1}$ for the 5-dimensional Friedman regression problem.}
	\label{fig:explainability_PDP_friedman_x1}
\end{figure}

The SHAP plots for $x_{1}$, while they do reveal interactions similar to the PDP/ICE plots, do not make it immediately clear that the effects of $x_{1}$ and $x_{2}$ are relatively flat for small values of both variables. This is because SHAP values emphasize the contribution of input variables to individual predictions, rather than illustrating the global effect of changing an input variable. As a result, while SHAP is valuable on its own, certain aspects may not be easily interpreted from it.

\begin{figure}[hbt!]
	\centering
 	\begin{subfigure}{.45\columnwidth}
		\includegraphics[width=1\columnwidth]{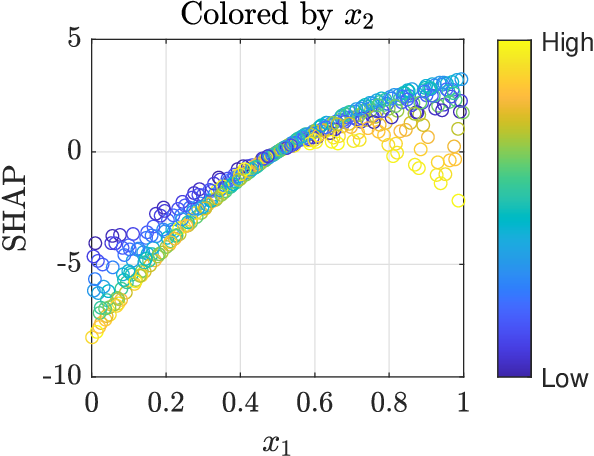}%
		\caption{SHAP}%
		\label{fig:SHAP_plot_Vr_m_x1_x2.eps}
	\end{subfigure}\hfill%
 	\begin{subfigure}{.45\columnwidth}
		\includegraphics[width=1\columnwidth]{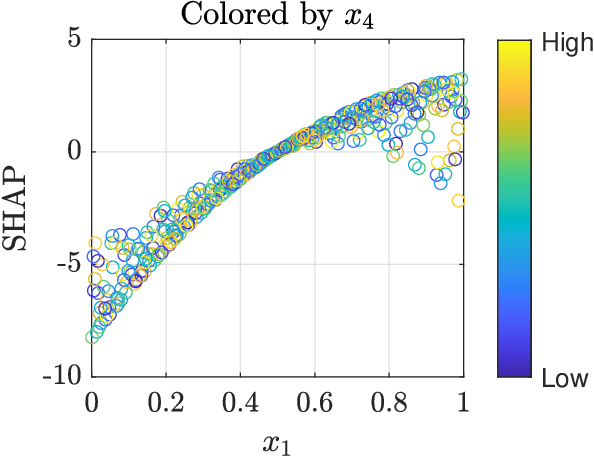}%
		\caption{SHAP}%
		\label{fig:SHAP_plot_Vr_m_x1_x4.eps}
	\end{subfigure}\hfill%
    \caption{SHAP dependence plots of $x_{1}$ for the 5-dimensional Friedman regression problem.}
	\label{fig:explainability_SHAP_friedman_x1}
\end{figure}

The joint PDP and SHAP interaction plots for $x_{1}-x_{2}$ are shown in Fig.~\ref{fig:explainability_friedman_int}, respectively (the other interactions are not shown since they are essentially non-existent). Note that the SHAP visualization was generated by holding the values of the other variables constant at the center of the input space. The color of those two plots corresponds to the value of partial dependence, for the left PDP interaction plot, and to the values of SHAP for the right interaction SHAP plot.
It is worth noting that both plots visualize two different aspects. That is, the joint PDP plot shows the average impact of changing $x_{1}$ and $x_{2}$ together on the Friedman function. Such a plot is primarily useful if the aim is to see both the main effects and the combined impact. Conversely, the SHAP interaction plot isolates the specific interaction between these two variables, allowing us to focus solely on the synergistic effect that cannot be attributed to either variable independently. 

\begin{figure}[hbt!]
	\centering
	\begin{subfigure}{.45\columnwidth}
		\includegraphics[width=1\columnwidth]{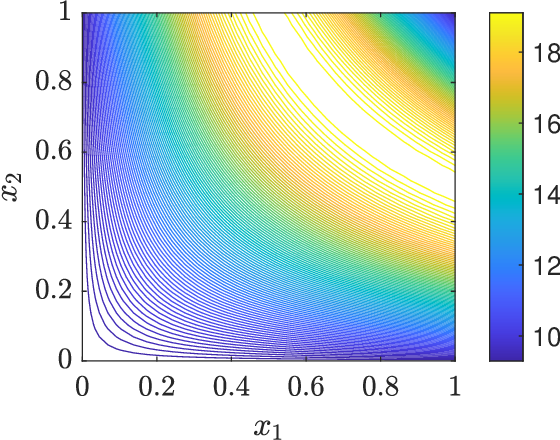}%
		\caption{Joint partial dependence function}%
		\label{fig:friedman_int_x1_x2}
	\end{subfigure}\hfill%
 	\begin{subfigure}{.45\columnwidth}
		\includegraphics[width=1\columnwidth]{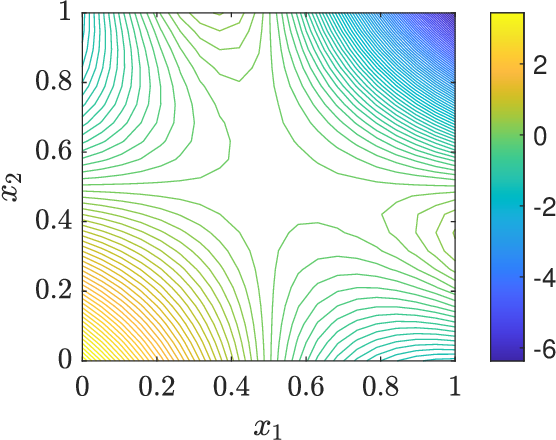}%
		\caption{Interaction SHAP}%
		\label{fig:friedman_int_x1_x4}
	\end{subfigure}\hfill%
    \caption{Two-way dependence plots of $x_{1}-x_{2}$ for the Friedman function according to the joint partial dependence function and interaction SHAP.}
	\label{fig:explainability_friedman_int}
\end{figure}

\subsection{5-variable wind turbine fatigue problem}
We now address the uncertainty analysis of a wind turbine fatigue problem, where the uncertainties involve five independent random variables related to wind conditions~\cite{graf2016high}. The input variables considered are wind speed ($V_{hub}$), wind direction ($\theta_{w}$), wave height ($H_{s}$), wave period ($T_{p}$), and wind-platform direction ($\theta_{p}$) with their histograms are shown in Fig.~\ref{fig:NREL_HIST}. Notice that the data was normalized within $[-1,1]^{5}$, as provided by the original paper. The side-side tower base bending moment ($M_{x,twr}$) is selected as the output of interest due to its nonlinear {behavior}, evaluated using FAST, NREL's aeroelastic code for simulating the coupled dynamic response of wind turbines~{\cite{jonkman2005fast}}. Subsequently, a PCE model with $p_{max}=5$ and data-driven orthogonal polynomials was built using 2000 samples, yielding $R^2=0.97$ on the validation data set comprising 500 samples, which is more than sufficient for knowledge discovery purposes. 
\begin{figure}[hbt!]
	\centering
		\includegraphics[width=1\columnwidth]{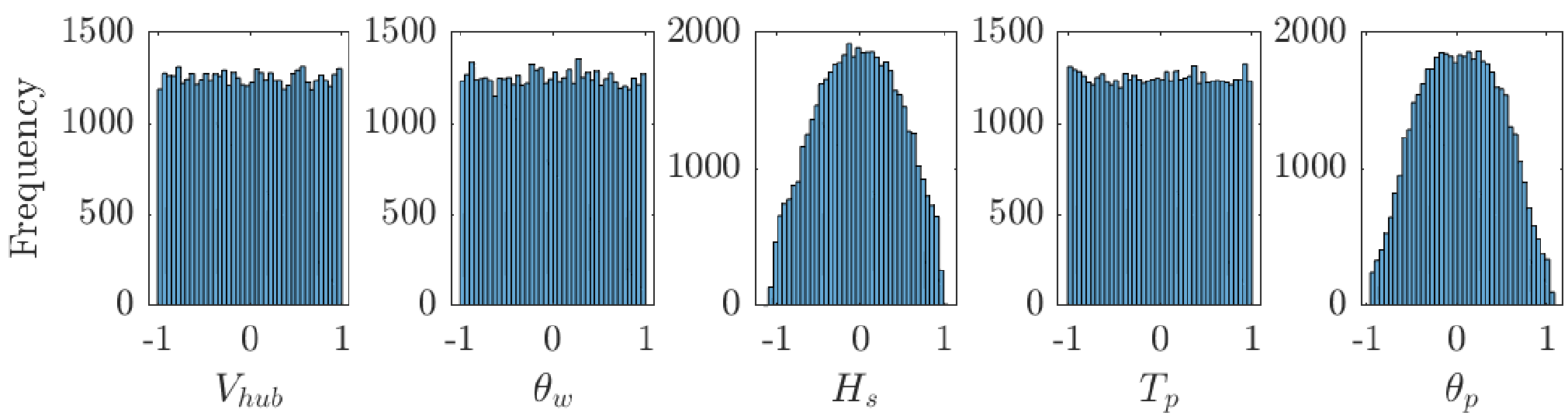}%
    \caption{Histograms of the input variables for the 5-variable wind turbine fatigue problem.}
	\label{fig:NREL_HIST}
\end{figure}
\subsubsection{Global Sensitivity Analysis}
Fig.~\ref{fig:NREL_GSA} shows the GSA results of three metrics  ($I_{pdp}$, $\mu_{I_{\mathrm{ice}}}$, and $\bar{Sh}$) for the 5-variable wind turbine fatigue problem (the {Sobol'} indices are not shown due to the different scale, but instead shown in Fig.~\ref{fig:two-way_GSA_NREL}). All GSA metrics agree that the most important variable is $V_{hub}$ followed by $\theta_{w}$ and $H_{s}$.  On the other hand, $T_{p}$ and $\theta_{p}$ are deemed to be insignificant. The averaged SHAP perceives the importance of $\theta_{w}$ is about half that of $H_{s}$, with $I_{pdp}$ and $\mu_{I_{\mathrm{ice}}}$ perceiving that the former is slightly more important than the latter. It is interesting to see from the barplot that, although $V_{hub}$ is the most impactful variable, the effect of interaction is felt the strongest by $\theta_{w}$ and $H_{s}$ as indicated by the higher value of $\sigma_{I_{\mathrm{ice}}}$ compared to $V_{hub}$. The cause for this will be discussed soon by visualizing the PDP and ICE curves. It is worth noting again that $\sigma_{I_{\mathrm{ice}}}$ does not indicate which combinations of variables contribute the strongest to the interaction; however, the information from $\sigma_{I_{\mathrm{ice}}}$ is useful in revealing the variability of the impact, complemented by the average impact from $\mu_{I_{\mathrm{ice}}}$. Lastly, the impact of interactions on $T_{p}$ and $\theta_{p}$ is nevertheless small, and their importance can be neglected. 

The violin plots of {ICE-based correlation values} are shown in Fig.~\ref{fig:NREL_COR}. The actual $\sigma_{\rho}$ values for $V_{hub}$, $\theta_{w}$, $H_{s}$, $T_{p}$, and $\theta_{p}$ are  $0.058$,    $0.275$,  $0.160$,    $0.444$, and  $0.573$, respectively.  These violin plots suggest that the interaction does not significantly alter the main trend of how $V_{hub}$ influences the moment. However, a notable shift in trends is evident for $\theta_{w}$ and $H_{s}$, indicating that the marginal behavior of these two variables is strongly influenced by their interactions with each other or with other variables. While the shifts in trend due to $T_{p}$ and $\theta_{p}$ appear significant, it is important to note that these changes may be the result of minor variations, as the impact of these two variables is relatively insignificant. Finally, quantification of the impact of interactions is shown in Fig.~\ref{fig:two-way_GSA_NREL}, with the diagonal entries showing the main effect (for {Sobol'} indices, the first-order {Sobol'} indices). All GSA agree that the most important interaction is between $H_{s}$ and $\theta_{w}$, followed by $V_{hub}-H_{s}$. Notice that the off-diagonal colors representing the correlations between $V_{hub}$, $\theta_{w}$, and $H_{s}$ are brighter for averaged SHAP and $I_{pdp}$ compared to {Sobol'} indices. This difference arises because {Sobol'} indices are variance-based, which amplifies the importance of the most influential variables. 

\begin{figure}[hbt!]
	\centering
 	\begin{subfigure}{.40\columnwidth}
		\includegraphics[width=1\columnwidth]{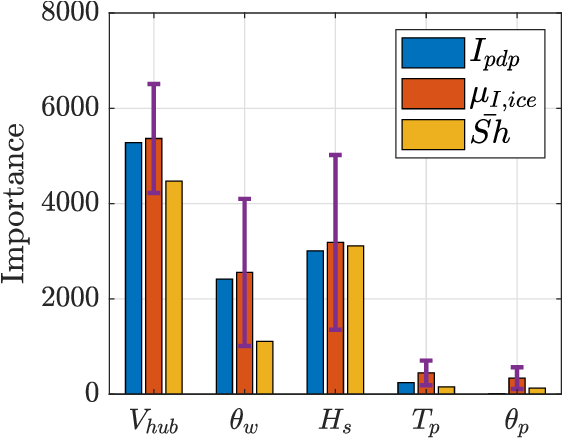}%
		\caption{Barplot of $I_{pdp}$, $\mu_{I_{\mathrm{ice}}}$, and $\bar{Sh}$}%
		\label{fig:NREL_GSA}
	\end{subfigure}\hfill%
  	\begin{subfigure}{.40\columnwidth}
		\includegraphics[width=1\columnwidth]{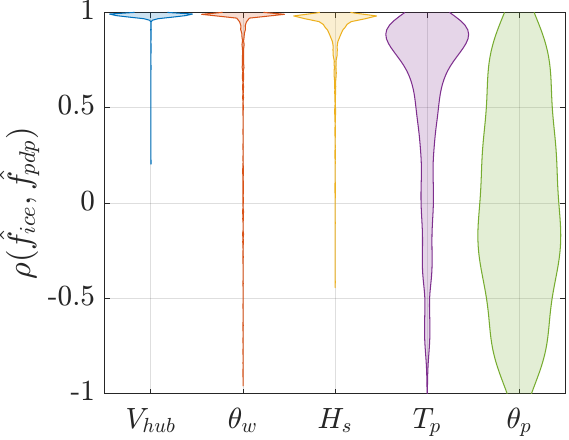}%
      \caption{\textcolor{black}{Violin plot of {ICE-based correlation values}}.}
		\label{fig:NREL_COR}
	\end{subfigure}\hfill%
    \caption{GSA results for the 5-variable wind turbine fatigue problem ($I_{pdp}$, $\mu_{I_{\mathrm{ice}}}$, and $\bar{Sh}$) and the ICE correlation values.}
	\label{fig:GSA_COR_NREL}
\end{figure}

\begin{figure}[hbt!]
	\centering
 	\begin{subfigure}{.33\columnwidth}
		\includegraphics[width=1\columnwidth]{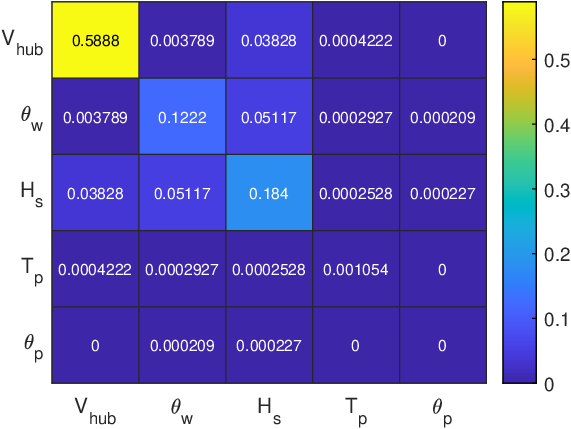}%
		\caption{{Sobol'} indices}%
		\label{fig:NREL_2WAY_SOB}
	\end{subfigure}\hfill%
 	\begin{subfigure}{.33\columnwidth}
		\includegraphics[width=1\columnwidth]{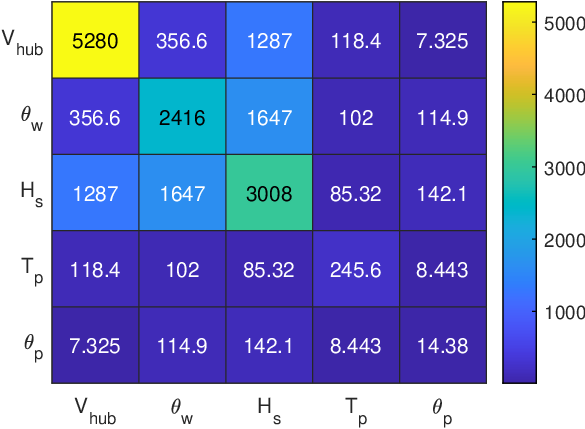}%
		\caption{$I_{pdp}$}%
		\label{fig:NREL_2WAY_IPDP}
	\end{subfigure}\hfill%
 	\begin{subfigure}{.33\columnwidth}
		\includegraphics[width=1\columnwidth]{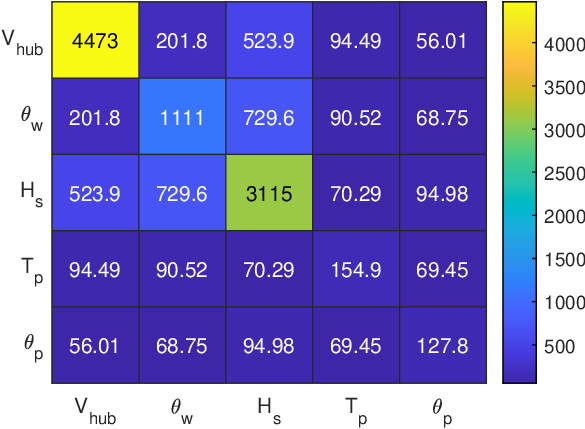}%
		\caption{SHAP}%
		\label{fig:NREL_2WAY_SHAP}
	\end{subfigure}\hfill%
    \caption{Two-way GSA results for the 5-variable wind turbine fatigue problem. The diagonal entries show the main effect importance, while the off-diagonal entries show the two-way interaction importance.}
	\label{fig:two-way_GSA_NREL}
\end{figure}

\subsubsection{Visualization}
Fig.~\ref{fig:NREL_PDP} shows the PDPs and ICE curves for the three most significant variables, clearly showing how they affect the model's behavior. The effect of $V_{hub}$ is nonlinear, with $M_{x,twr}$ generally increasing as $V_{hub}$ rises, although there is a slight decrease when $V_{hub}$ slightly exceeds its nominal value. Similarly, an increase in $\theta_{w}$ tends to raise $M_{x,twr}$. However, the actual trend influenced by $H_{s}$ is more complex, as revealed by the ICE plots and two-way PDPs. The {harmonic looking} pattern in $M_{x,twr}$ due to changes in $H_{s}$ is evident, with peaks observed around $H_{s}=-0.7$ and $H_{s}=0.7$. Additionally, as can be seen from the ICE plots, the impact on the bending moment is more pronounced when $\theta_{w}$ is large. Linking this observation with the GSA metrics, the large range of $\hat{f}_{pdp}$ of $V_{hub}$ contributes to its large importance. However, the impact of interaction itself is relatively small, as can be seen from the smaller deviation of ICE curves from the PDP. On the other hand, the deviation of ICE curves from their partial dependence function can be clearly seen for $\theta_{w}$ and $H_{s}$, particularly the latter; Overall, this contributes to their relatively large values of $\sigma_{I_{\mathrm{ice}}}$. For comparison, Fig.~\ref{fig:NREL_SHAP} shows the SHAP dependence plots for the same set of variables. While the insights from SHAP and PDP/ICE are similar, the former appears more disordered, which is understandable given that SHAP values exist for every point in the input space. In that sense, it is easier to understand the effect of changing $V_{hub}$, $\theta_{w}$, and $H_{s}$ since PDP and ICE curves are smooth.

\begin{figure}[hbt!]
	\centering
	\begin{subfigure}{.32\columnwidth}
		\includegraphics[width=1\columnwidth]{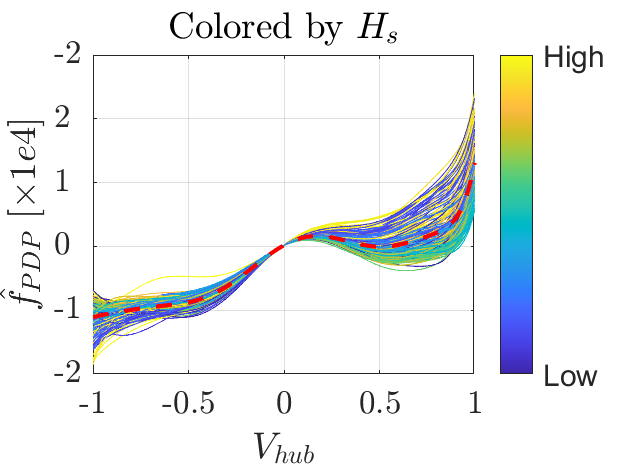}%
      \caption{$V_{hub}$}
		\label{fig:NREL_PDP_X1}
    \end{subfigure}
	\begin{subfigure}{.32\columnwidth}
		\includegraphics[width=1\columnwidth]{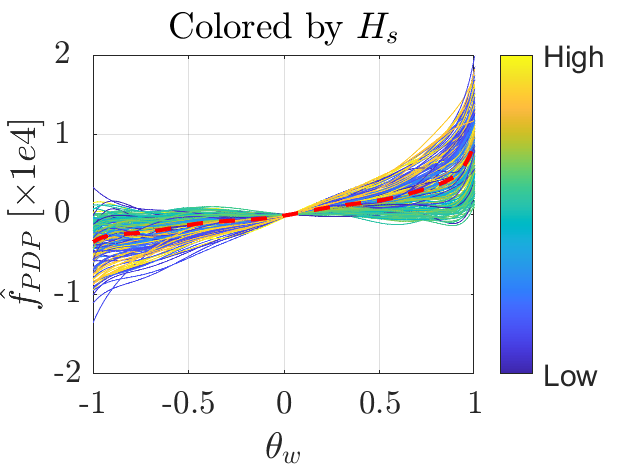}%
      \caption{$\theta_{w}$}
		\label{fig:NREL_PDP_X2}
    \end{subfigure}
	\begin{subfigure}{.32\columnwidth}
		\includegraphics[width=1\columnwidth]{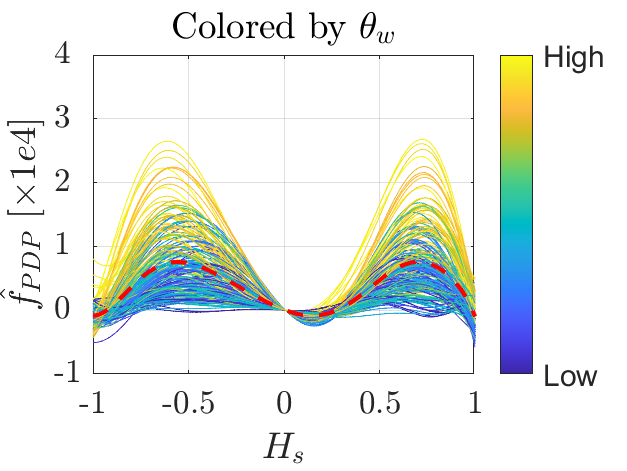}%
      \caption{$H_{s}$}
		\label{fig:NREL_PDP_X3}
    \end{subfigure}
        \caption{PDP and ICE plots for the first three important variables.}
	\label{fig:NREL_PDP}
\end{figure}

\begin{figure}[hbt!]
	\centering
	\begin{subfigure}{.32\columnwidth}
		\includegraphics[width=1\columnwidth]{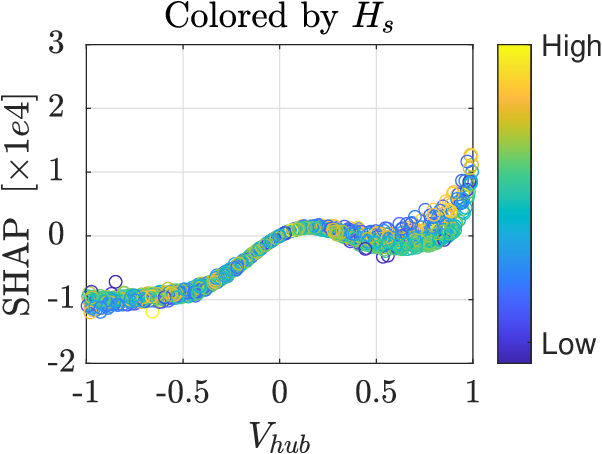}%
      \caption{$V_{hub}$}
		\label{fig:NREL_SHAP_X1}
    \end{subfigure}
	\begin{subfigure}{.32\columnwidth}
		\includegraphics[width=1\columnwidth]{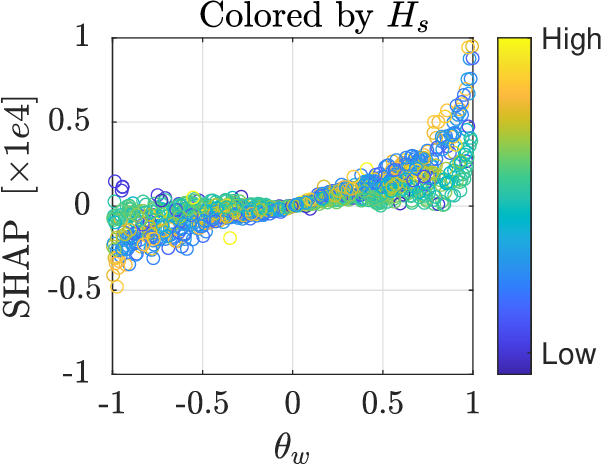}%
      \caption{$\theta_{w}$}
		\label{fig:NREL_SHAP_X2}
    \end{subfigure}
	\begin{subfigure}{.32\columnwidth}
		\includegraphics[width=1\columnwidth]{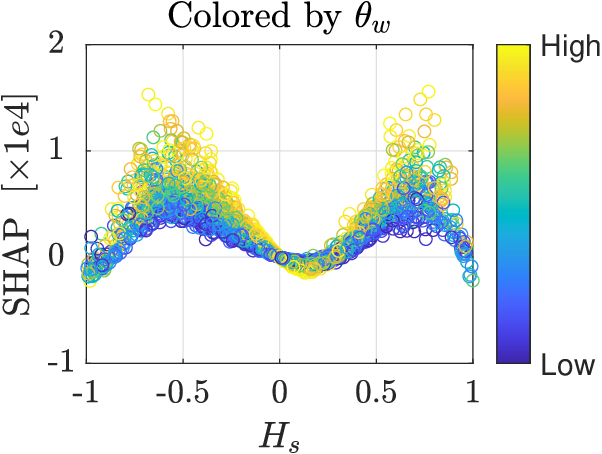}%
      \caption{$H_{s}$}
		\label{fig:NREL_SHAP_X3}
    \end{subfigure}
        \caption{SHAP dependence plots for the first three important variables.}
	\label{fig:NREL_SHAP}
\end{figure}

Fig.~\ref{fig:NREL_PDP_int} depicts the two-way PDP plots for the combinations of the three most important variables ({\textit{i.e.}}, $V_{hub}-\theta_{w}$, $V_{hub}-H_{s}$, and $\theta_{w}-H_{s}$). The complex interactions between $\theta_{w}$ and $H_{s}$ are clearer when plotting their joint partial dependence function. For example, it can be observed that the marginal impact of $H_{s}$ becomes larger when the corresponding $\theta_{w}$ is large. To ensure the reliability of wind turbines, the GSA metrics and plots indicate that specific conditions can result in significant bending moments. As previously mentioned, these conditions occur when $\theta_{w}$ and $V_{hub}$ are large, and $H_{s}$ is approximately $-0.7$ or $0.7$. One limitation of the joint partial dependence plot is that it can be challenging to discern the interaction strength between $V_{hub}$ and $\theta_{w}$ (which is actually small). Therefore, this information should be considered alongside the analysis of GSA metrics for a more complete understanding. 
\begin{figure}[hbt!]
	\centering
	\begin{subfigure}{.32\columnwidth}
		\includegraphics[width=1\columnwidth]{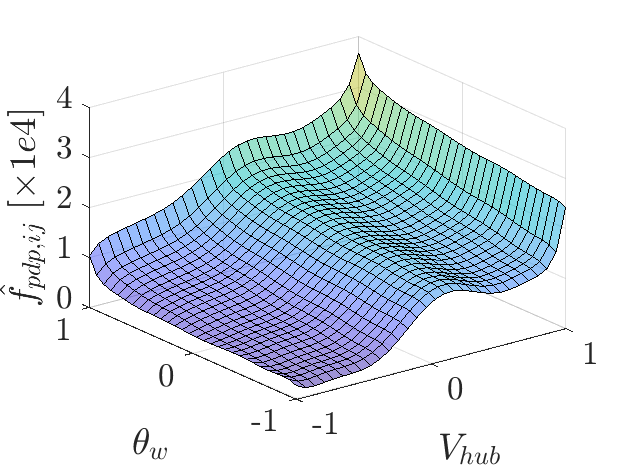}%
      \caption{$V_{hub}-\theta_{w}$}
		\label{fig:NREL_PDP_X1_X2}
    \end{subfigure}
	\begin{subfigure}{.32\columnwidth}
		\includegraphics[width=1\columnwidth]{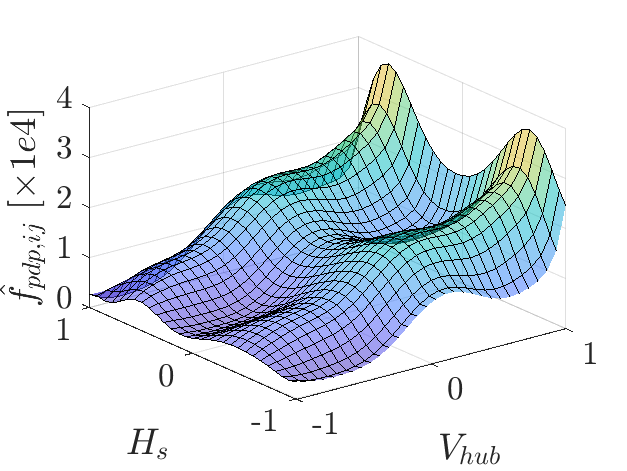}%
      \caption{$V_{hub}-H_{s}$}
		\label{fig:NREL_PDP_X1_X3}
    \end{subfigure}
	\begin{subfigure}{.32\columnwidth}
		\includegraphics[width=1\columnwidth]{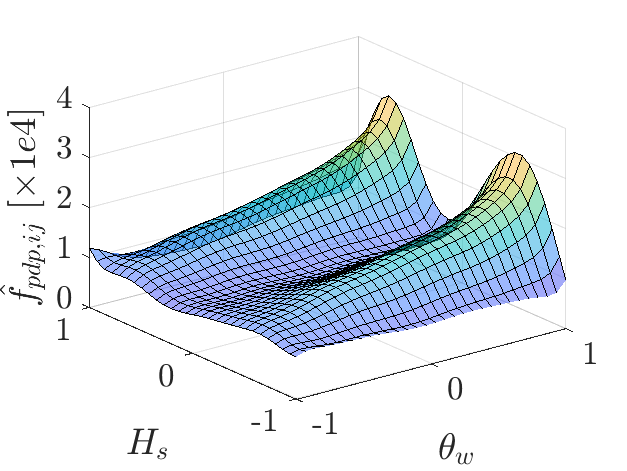}%
      \caption{$\theta_{w}-H_{s}$}
		\label{fig:NREL_PDP_X2_X3}
    \end{subfigure}
        \caption{Two-way PDP plots for the first three important variables.}
	\label{fig:NREL_PDP_int}
\end{figure}

\subsection{9-variable Airfoil database problem}
\subsubsection{Data set}
We evaluated the usefulness of PDP and ICE, particularly the global sensitivity metrics derived from ICE, for knowledge discovery in an airfoil database from Agarwal et al.~\cite{agarwal2024comprehensive}, where aerodynamic coefficients were computed using OpenFOAM at $Re = 10^{5}$. The dataset was built by fitting eight CST parameters~\cite{kulfan2008universal} to airfoils from the UIUC Airfoil Data Site. Aerodynamic performance was then obtained using the simpleFOAM solver with the viscous Spalart–Allmaras turbulence model. Simulations were conducted at 13 different angles of attack ($\alpha$), spanning from -4 to 8 degrees, with an increment of 1 degree. The output of interest is the drag coefficient ($C_{d}$). Four basis functions are deemed sufficient for this purpose (Fig.~\ref{fig:CST_basis_func}). Hence, in total there are nine input variables, collected into a vector $\boldsymbol{x}=[\alpha, A_{l,1},A_{l,2},A_{l,3},A_{l,4},A_{u,1},A_{u,2},A_{u,3},A_{u,4}]^{T}$, where $A$ is the CST coefficients and the subscript $l$ and $u$ indicate upper and lower surface, respectively.

After removing erroneous aerodynamic data, particularly abnormal \(C_d\)–\(\alpha\) trends, the dataset comprised 6{,}600 samples from 510 airfoils at 13 angles of attack. Figure~\ref{fig:CST_airfoil_shapes} shows 100 representative airfoils generated around the center of the CST coefficient range. A data-driven PCE model (\(p_{\max}=4\), \(q=1\)) was trained on 6{,}000 samples and tested on 600, achieving \(R^{2}=0.994\), demonstrating accuracy suitable for knowledge discovery.

\begin{table}[ht!]
\centering
\begin{tabular}{|c|c|c|}
\hline
\textbf{Variable} & \textbf{Lower Bound} & \textbf{Upper Bound} \\ \hline
$\alpha$ [deg]          & -4.000          & 8.000              \\ \hline
$A_{l,1}$         & -0.236              & -0.077              \\ \hline
$A_{l,2}$         & -0.249              & 0.090               \\ \hline
$A_{l,3}$         & -0.379              & 0.068               \\ \hline
$A_{l,4}$         & -0.219              & 0.237               \\ \hline
$A_{u,1}$         & 0.131               & 0.338               \\ \hline
$A_{u,2}$         & 0.104               & 0.475               \\ \hline
$A_{u,3}$         & 0.081               & 0.417               \\ \hline
$A_{u,4}$         & 0.041               & 0.429               \\ \hline
\end{tabular}
\caption{Variables and their corresponding lower and upper bounds for the 9-variable airfoil problem.}
	\label{tbl:CST_airfoil}
\end{table}

\begin{figure}[hbt!]
	\centering
	\begin{subfigure}{.45\columnwidth}
		\includegraphics[width=1\columnwidth]{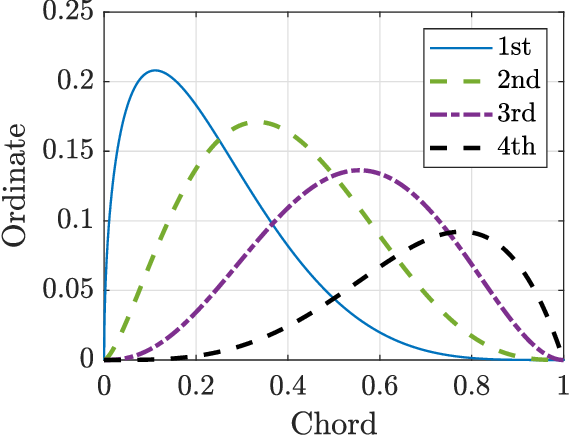}%
      \caption{Basis functions}
		\label{fig:CST_basis_func}
    \end{subfigure}
	\begin{subfigure}{.45\columnwidth}
		\includegraphics[width=1\columnwidth]{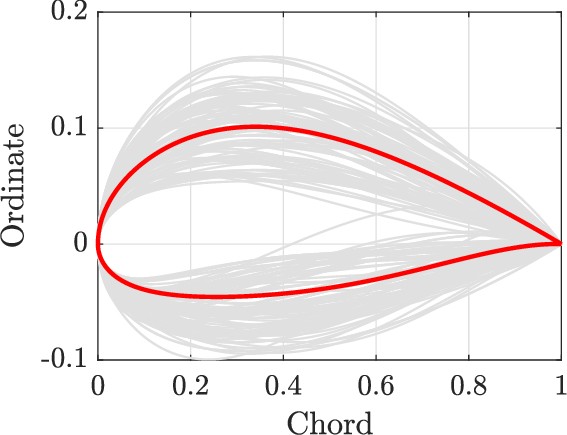}%
      \caption{CST-generated airfoils}
		\label{fig:CST_airfoil_shapes}
    \end{subfigure}
        \caption{Four basis functions of CST and a snapshot of airfoils from the database with the airfoil at the centre of the CST range.}
	\label{fig:CST_airfoil}
\end{figure}

\subsubsection{Result}
The GSA and ICE correlation results for the 9-variable airfoil problem are shown in Fig.~\ref{fig:airfoil_GSA_plot}. The two-way GSA results are shown in Fig.~\ref{fig:two-way_GSA_AIRFOIL} (refer to the off-diagonal entries). 
 Fig.~\ref{fig:PDP_ICE_airfoil_database} shows the one-dimensional PDP and ICE plots for select variables. As can be seen in Fig.~\ref{fig:airfoil_GSA_plot}, all GSA metrics agreed on ranking the importance of the four first important input variables ({\textit{i.e.}}, $\alpha$, $A_{u,2}$, $A_{u,3}$, and $A_{u,1}$). From GSA, it can be seen that $\alpha$ is the most important parameter in terms of drag generation, which makes sense since changing $\alpha$ significantly changes the pressure and shear-stress distribution at the airfoil; However, our main focus here is on the CST parameters. All GSA metrics consistently indicate that the upper surface CST parameters dominate drag generation, reflecting their influence on boundary-layer separation. The most critical variable is \( A_{u,2} \), which shapes the upper surface near the maximum thickness. Minor discrepancies in the ranking of less influential variables (lower-surface terms and \( A_{u,4} \)) are negligible.


It can also be seen that the values of $\mu_{I_{\mathrm{ice}}}$ are larger than $I_{pdp}$. In this regard, such differences are notably relatively significant for the first three lower surface CST variables, indicating that the respective $I_{pdp}$ values are smaller than $\mu_{I_{\mathrm{ice}}}$ due to alternating positive and negative correlations. It can also be seen that the impact of interactions between variables on the ICE curves is strong, as evidenced by the significantly high value of $\sigma_{I_{\mathrm{ice}}}$ for all variables (see the error bar in Fig.~\ref{fig:airfoil_GSA_plot}). Therefore, the drag response surface is characterized by nonlinearity and strong interactions between input variables. It is worth noting that the values of $\sigma_{I_{\mathrm{ice}}}$ themselves do not directly inform which variables contribute to the interactions for a specific input variable. Although it makes sense that the interaction is primarily due to $\alpha$ since it is the most important input variable, the interaction with other CST variables also contributes to such differences. 

\begin{figure}[hbt!]
	\centering
	\begin{subfigure}{.45\columnwidth}
		\includegraphics[width=1\columnwidth]{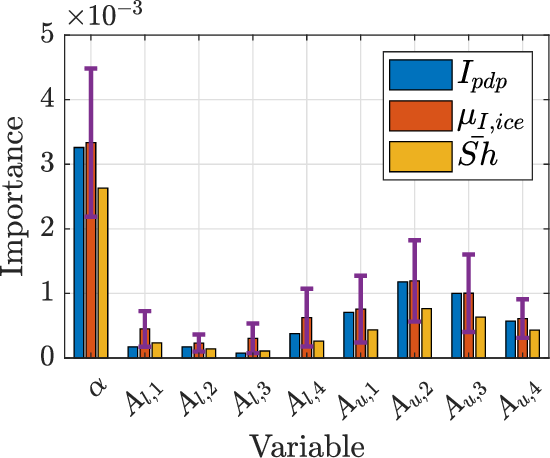}%
		\caption{Barplot of $I_{pdp}$, $\mu_{I_{\mathrm{ice}}}$, and $\bar{Sh}$}%
		\label{fig:airfoil_GSA}
    \end{subfigure}
	\begin{subfigure}{.46\columnwidth}
		\includegraphics[width=1\columnwidth]{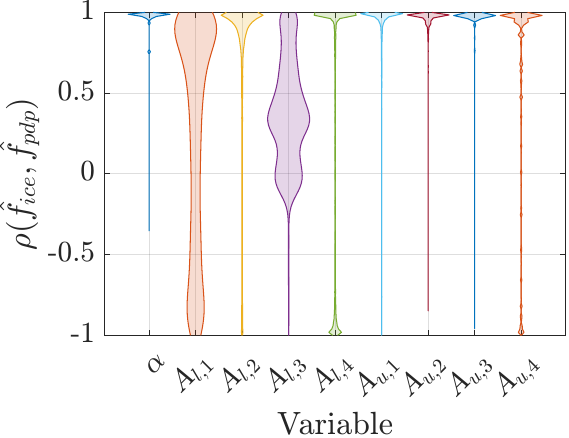}%
       \caption{\textcolor{black}{Violin plot of {ICE-based correlation values}}.}
		\label{fig:airfoil_COR}
    \end{subfigure}
        \caption{GSA and ICE correlation values results for the 9-variable airfoil problem}
	\label{fig:airfoil_GSA_plot}
\end{figure}

\begin{table}[ht!]
\centering
\begin{tabular}{|c|c|}
\hline
\textbf{Variable} & $\sigma_{\rho}$ \\ \hline
$\alpha$    & 0.011 \\ \hline
$A_{l,1}$         & 0.705 \\ \hline
$A_{l,2}$         & 0.491 \\ \hline
$A_{l,3}$         & 0.309 \\ \hline
$A_{l,4}$         & 0.829 \\ \hline
$A_{u,1}$         & 0.319 \\ \hline
$A_{u,2}$         & 0.050 \\ \hline
$A_{u,3}$         & 0.012 \\ \hline
$A_{u,4}$         & 0.400 \\ \hline
\end{tabular}
\caption{Standard deviation of ICE correlation values for the 9-variable airfoil problem.}
\label{tbl:CST_airfoil_ICE_cor}
\end{table}

The two-way GSA results are shown in the off-diagonal entries of Fig.~\ref{fig:two-way_GSA_AIRFOIL}. We focus exclusively on the CST coefficients, omitting the impact of the angle of attack to enhance plot clarity. Compared to the main effects, as expected, the interaction effects are relatively weaker. As discussed, the strongest interaction occurs between $A_{u,2}$ and $A_{u,3}$, which is expected since both are the most influential variables. However, it is important to note that these interactions alter the relationships between the input variables and the drag coefficient. 
\begin{figure}[hbt!]
	\centering
 	\begin{subfigure}{.33\columnwidth}
		\includegraphics[width=1\columnwidth]{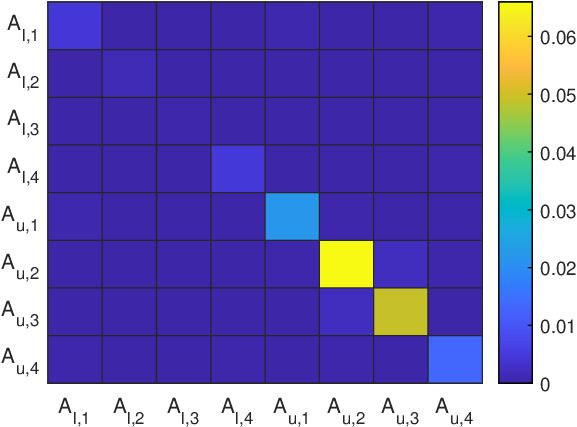}%
		\caption{{Sobol'} indices}%
		\label{fig:AIRFOIL_2WAY_Sobol'}
	\end{subfigure}\hfill%
 	\begin{subfigure}{.33\columnwidth}
		\includegraphics[width=1\columnwidth]{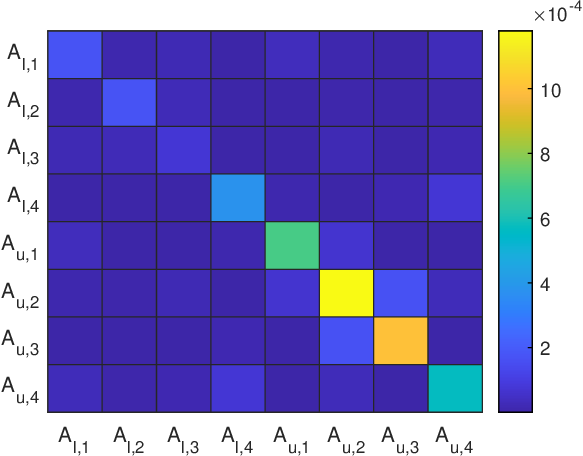}%
		\caption{$I_{pdp}$}%
		\label{fig:AIRFOIL_2WAY_I_PDP}
	\end{subfigure}\hfill%
 	\begin{subfigure}{.33\columnwidth}
		\includegraphics[width=1\columnwidth]{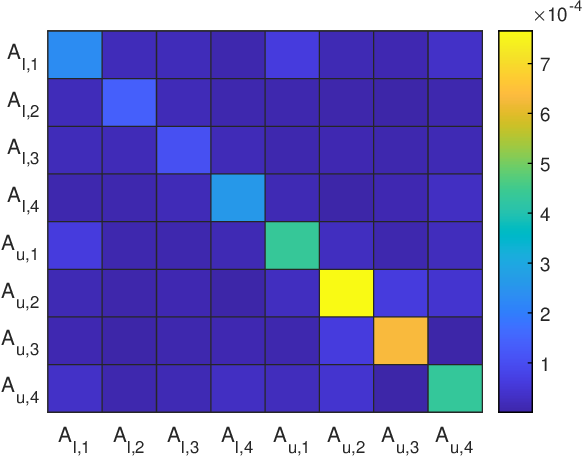}%
		\caption{SHAP}%
		\label{fig:AIRFOIL_2WAY_SHAP.eps}
	\end{subfigure}\hfill%
    \caption{Two-way GSA results for the 9-variable airfoil database problem. The diagonal entries show the main effect importance, while the off-diagonal entries show the two-way interaction importance.}
	\label{fig:two-way_GSA_AIRFOIL}
\end{figure}

The violin plot of ICE correlation values presented in Fig.~\ref{fig:airfoil_COR} offers several key insights. The standard deviations of ICE correlation values are shown in Table~\ref{tbl:CST_airfoil_ICE_cor}. First, it is evident that lower surface CST parameters, particularly $A_{l,1}$ and $A_{l,4}$, exhibit high dispersion in terms of ICE correlation values, indicating significant variations in their ICE curves due to interactions with other variables (as confirmed in Fig.~\ref{fig:PDP_ICE_airfoil_database}). Conversely, the lower dispersion for upper surface CST parameters suggests that interactions have a minimal effect on the overall trend, though the magnitude of the impact may still vary, as seen in the corresponding PDP and ICE plots. Overall, the lower surface CST parameters are more prone to trend changes due to interactions. In contrast, the upper surface parameters show less variation in trend but remain the most important variables in terms of significance. For the PDP and ICE curves of all variables, the trend consistently remains monotonic, reinforcing the hypothesis that the drag response of an airfoil is an unimodal function~\cite{bons2019multimodality}, even in the presence of nonlinearity.

The impact of $A_{l,1}$ is characterized by complex interaction with $A_{u,1}$ (see Fig.~\ref{fig:airfoil_PDP_ICE_AL1}). In this sense, a nonlinear trend that also changes in terms of correlation with the value of $A_{l,1}$ is present. Such an alternating trend results in a partial dependence function close to the flat line, which is why its $I_{pdp}$ is notably smaller than $\mu_{I_{\mathrm{ice}}}$. The same trend also exists for $A_{l,4}$ as shown in Fig.~\ref{fig:airfoil_PDP_ICE_AL4}. We decided to plot how $A_{l,4}$ interacts with $\alpha$ for demonstration purposes. In this regard, we can see that the association of the PDP of $A_{l,4}$ with its input value changes depending on the angle of attack. To be more exact, increasing $A_{l,4}$ results in higher drag for a positive angle of attack, while the opposite trend occurs for a smaller angle of attack. The PDP and ICE curves for \(A_{u,2}\) clearly demonstrate its interaction with \(A_{u,3}\) in influencing drag (see Fig.~\ref{fig:airfoil_PDP_ICE_AU2}). Notably, the effect of \(A_{u,2}\) is most pronounced when \(A_{u,3}\) is at a high value. Conversely, this implies that altering \(A_{u,2}\) has minimal impact on drag when \(A_{u,3}\) is small. Aerodynamically, the combinations of small $A_{u,2}$ and $A_{u,3}$ correspond to airfoils with reduced thickness, resulting in lower drag. 

The SHAP dependence plots roughly reveal the same trend as the PDP and ICE plots (see Fig.~\ref{fig:SHAP_airfoil_database}). However, the colouring of SHAP plots might be too crowded, especially if the dots are too scattered and complex interactions exist. For example, it is quite difficult to decipher the trend in the $A_{u,2}$ plot regarding how it interacts with $A_{u,3}$. In this regard, PDP and ICE provide more straightforward and easier-to-read plots than SHAP.

\begin{figure}[hbt!]
	\centering
  	\begin{subfigure}{.33\columnwidth}
		\includegraphics[width=1\columnwidth]{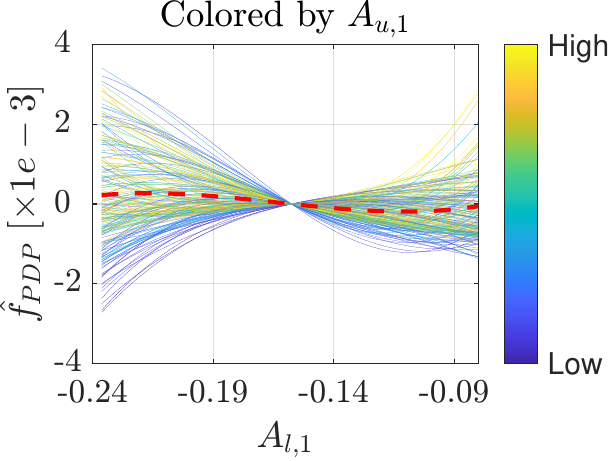}%
		\caption{$A_{l,1}$}%
		\label{fig:airfoil_PDP_ICE_AL1}
	\end{subfigure}\hfill%
  	\begin{subfigure}{.33\columnwidth}
		\includegraphics[width=1\columnwidth]{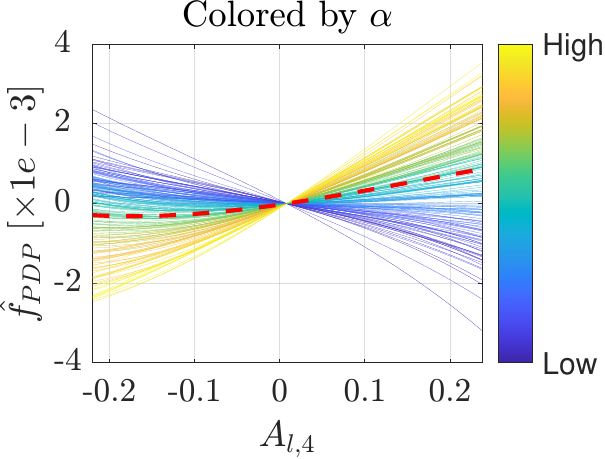}%
	\caption{$A_{l,4}$}%
		\label{fig:airfoil_PDP_ICE_AL4}
	\end{subfigure}\hfill%
   	\begin{subfigure}{.33\columnwidth}
		\includegraphics[width=1\columnwidth]{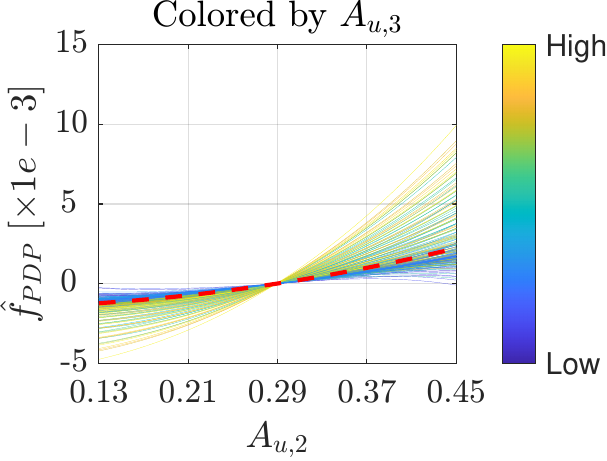}%
	\caption{$A_{u,2}$}%
		\label{fig:airfoil_PDP_ICE_AU2}
	\end{subfigure}\hfill%
    \caption{PDP and ICE plots for selected inputs for the 9-variable airfoil database problem}
	\label{fig:PDP_ICE_airfoil_database}
\end{figure}

\begin{figure}[hbt!]
	\centering
 	\begin{subfigure}{.33\columnwidth}
		\includegraphics[width=1\columnwidth]{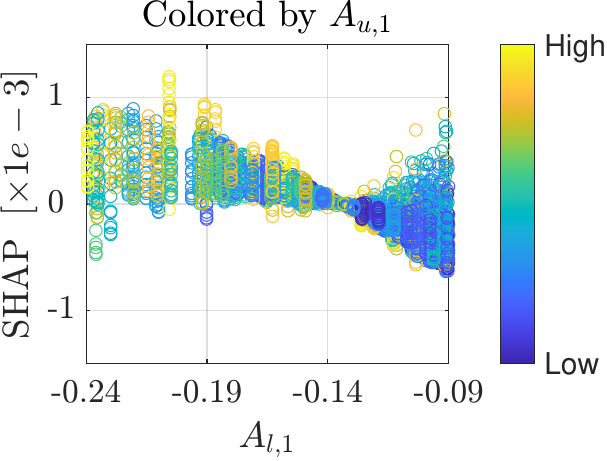}%
		\caption{$\alpha$}%
		\label{fig:airfoil_SHAP_AL1.eps}
	\end{subfigure}\hfill%
 	\begin{subfigure}{.33\columnwidth}
		\includegraphics[width=1\columnwidth]{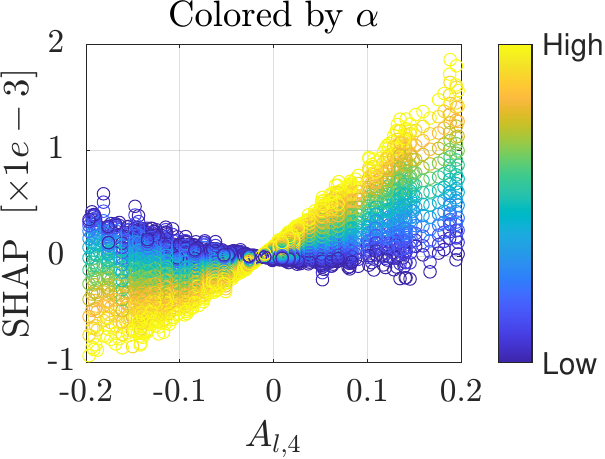}%
		\caption{$\alpha$}%
		\label{fig:airfoil_SHAP_AL4.eps}
	\end{subfigure}\hfill%
 	\begin{subfigure}{.33\columnwidth}
		\includegraphics[width=1\columnwidth]{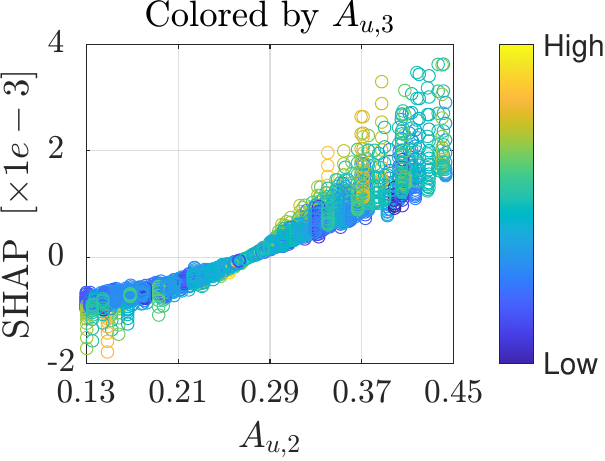}%
		\caption{$\alpha$}%
		\label{fig:airfoil_SHAP_AU2.eps}
	\end{subfigure}\hfill%
    \caption{SHAP dependence plots for selected inputs for the 9-variable airfoil database problem}
	\label{fig:SHAP_airfoil_database}
\end{figure}

Finally, Fig.~\ref{fig:joint_PDP_airfoil_database} visualizes the joint dependence plot for select variables, namely: $A_{l,1}-A_{u,1}$, $A_{l,4}-\alpha$, and $A_{u,2}-A_{u,3}$. The trend of $A_{u,2}-A_{u,3}$, in which it can be seen that minimum drag is achieved by decreasing the two variables together ({\textit{i.e.}}, due to the reduced thickness). On the other hand, the trend is more complex and wavy for the joint plot for $A_{u,1}$ and $A_{l,1}$. The change in the association of $A_{l,1}$ to drag as a function of $A_{u,1}$ can also be seen from this plot.

\begin{figure}[hbt!]
	\centering
   	\begin{subfigure}{.33\columnwidth}
		\includegraphics[width=1\columnwidth]{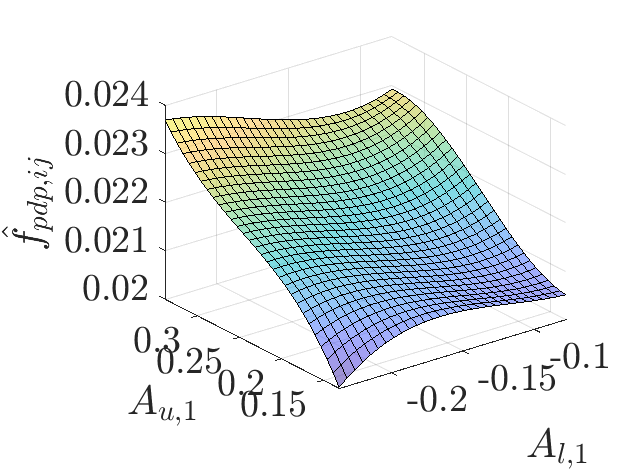}%
		\caption{$A_{l,1}-A_{u,1}$}%
		\label{fig:airfoil_joint_PDP_AU1_AL1_surf}
	\end{subfigure}\hfill%
 	\begin{subfigure}{.33\columnwidth}
		\includegraphics[width=1\columnwidth]{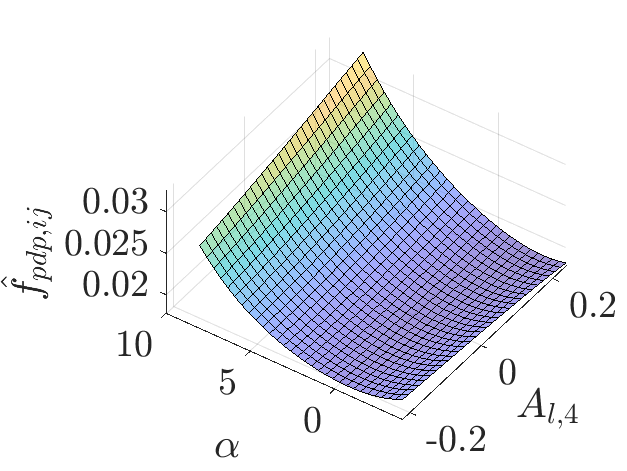}%
		\caption{$A_{l,4}-\alpha$}%
		\label{fig:airfoil_joint_PDP_AL4_alpha_surf}
	\end{subfigure}\hfill%
  	\begin{subfigure}{.33\columnwidth}
		\includegraphics[width=1\columnwidth]{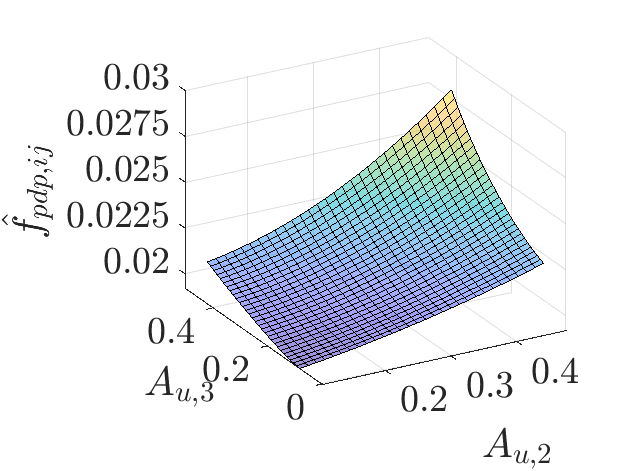}
		\caption{$A_{u,2}-A_{u,3}$}%
		\label{fig:airfoil_joint_PDP_AU2_AU3_surf}
	\end{subfigure}\hfill%
    \caption{PDP and ICE plots for selected inputs for the 9-variable airfoil database problem}
	\label{fig:joint_PDP_airfoil_database}
\end{figure}

\subsection{Interim summary}

In the workflow of knowledge discovery in engineering, our ICE-based summaries \(\mu_{I_{\mathrm{ice}},x_j}\), \(\sigma_{I_{\mathrm{ice}},x_j}\), and $\sigma_{\rho}$ are useful complements, since they capture instance-level spread and heterogeneity that scalar variance indices aggregate away. Based on the experiments that we performed, we summarize our main findings and key strengths of the proposed method as follows: 
\begin{enumerate}
    \item The ICE-based feature importance  metrics ($\mu_{I_{\mathrm{ice}}}$ and $\sigma_{I_{\mathrm{ice}}}$) provide richer insights than the traditional PDP-based metric ($I_{pdp}$). While $\mu_{I_{\mathrm{ice}}}$ captures effects missed by $I_{pdp}$, $\sigma_{I_{\mathrm{ice}}}$ \textcolor{black}{quantifies the heterogeneity of this influence across different conditional settings, thereby reflecting the strength of interaction effects. When used in conjunction with SHAP values and Sobol' indices, the proposed ICE-based metrics offer a complementary and informative perspective that emphasizes conditional response behavior rather than variance attribution alone.}

     \item The value of $\sigma_{I_{\mathrm{ice}}}$ reflects only the overall interaction impact, not specifically on which variables are involved. Interacting variables can be identified using joint PDPs, two-way Sobol' indices, or two-way SHAP values. Combining these visualizations with GSA metrics ($\mu_{I_{\mathrm{ice}}}$, $\sigma_{I_{\mathrm{ice}}}$, and two-way GSA) enables a more comprehensive understanding of the model’s behavior.

    \item The ICE correlation values provide a quantitative view of how interactions modify input–output relationships, revealing changes in linearity or correlation. \textcolor{black}{Their key strength lies in capturing directional and structural heterogeneity in conditional response behavior, which is not accessible through variance-based sensitivity measures alone. }However, they should be interpreted with caution, as they indicate association rather than the magnitude of impact.
    
     \item PDP (with ICE) and SHAP visualizations generally convey similar insights, though PDP and ICE are often more intuitive as they depict smooth model slices and averages. Joint dependence and SHAP interaction plots, while distinct, offer complementary perspectives. Combined with the proposed GSA metrics, these visualizations enable a comprehensive understanding of the predictive model.

\end{enumerate}
Our findings highlight the utility of ICE-based feature importance. While $\mu_{I_{\mathrm{ice}}}$ offers a more comprehensive view of the input variable's impact by capturing effects that $I_{pdp}$ might miss, $\sigma_{I_{\mathrm{ice}}}$ further enriches the analysis by revealing how interactions with other variables influence this impact. \textcolor{black}{However, it is also important to discuss some limitations of the proposed metrics:
\begin{enumerate}
    \item All proposed metrics (\textit{i.e.}, $\mu_{I_{\mathrm{ice}}}$, $\sigma_{I_{\mathrm{ice}}}$ and $\sigma_{\rho}$) are not suitable for handling correlated input variables. This limitation stems from the use of ICE itself, which evaluates conditional responses by varying one input at a time while sampling others independently. While this does not pose an issue in applications with independent inputs, such as design exploration or parametric studies, care must be taken when applying the proposed metrics to problems involving correlated variables.
    \item The proposed $\sigma_{I_{\mathrm{ice}}}$, although useful to assess the impact of interactions of an input with other inputs, cannot distinguish which variables it strongly interacts with. Although visualization can help, it might be troublesome to look at every single other variable. It also lacks the capability to model higher-order interaction effects, such as third-order interactions. To that end, this analysis in terms of magnitude should be done together with the interaction Sobol' indices. 
    \item The current method focuses only on global sensitivity in terms of how the input affects the output; it cannot be applied to specific cases, such as reliability-based sensitivity analysis, which focuses on extreme cases. This is in contrast to Sobol' indices, which can be extended to reliability-based sensitivity analysis by using the indicator function in the calculation of the metrics~\cite{cui2010moment}.
\end{enumerate}
}

\textcolor{black}{The proposed metrics are model-agnostic and can be applied to any predictive model. When applied to a surrogate model constructed from data, however, the resulting sensitivity measures are evaluated with respect to the trained model and may therefore vary across different models approximating the same physical system. In such cases, the computed importance reflects the behavior learned by the surrogate model. Therefore, to accurately reflect the underlying modeled function, the surrogate itself must first achieve sufficient predictive accuracy.}


\section{Conclusion and future works}
The primary goal of this paper is to evaluate the effectiveness of PDP and ICE in supporting engineering design exploration and analysis. Most importantly, we introduce new metrics that enhance the insights obtainable from existing PDP- and ICE-based approaches. Specifically, we propose ICE-based feature importance metrics that address the alternating-association issue inherent in PDP-based importance. The key idea is to compute the mean and standard deviation of the feature importance derived from each ICE curve, providing both a global average effect and information on how interactions with other variables influence a feature’s significance. This allows users to assess not only the average impact of an input variable but also its potential strongest or weakest influence on the predictive model. The standard deviation summarizes the effect of higher-order interactions in a single value. Additionally, we assess the correlation between ICE curves and PDP to further examine how interactions modify the input–output relationship.

We applied PDP and ICE to a 5-variable Friedman function, a 5-variable wind-turbine fatigue problem, and a 9-variable airfoil dataset to demonstrate the insights enabled by these methods and the new sensitivity metrics. PDP and ICE reveal information not easily obtained from SHAP, and each method contributes a distinct perspective. The proposed metrics further provide quantitative interaction-aware sensitivity measures that strengthen knowledge discovery in engineering design analysis.

\textcolor{black}{Despite these contributions, several limitations warrant discussion, and addressing them forms a central topic for future research. First, all proposed metrics ($\mu_{I_{\mathrm{ice}}}$, $\sigma_{I_{\mathrm{ice}}}$, and $\sigma_{\rho}$) inherit the limitations of ICE and are not suitable for correlated input variables. Second, while $\sigma_{I_{\mathrm{ice}}}$ summarizes the magnitude of interaction effects, it cannot identify which specific variables are responsible for strong interactions; thus, it should be complemented with interaction Sobol' indices when detailed decomposition is needed. Third, the framework focuses on global sensitivity in terms of average input--output influence and is not directly applicable to reliability-based sensitivity analysis that emphasizes extreme events.}

Future work may also explore integrating PDP and ICE insights into optimization frameworks, using the proposed sensitivity metrics to guide design decisions by prioritizing influential variables. Advanced visualization techniques that dynamically reveal interactions captured by ICE curves and PDPs would further improve interpretability, especially in complex, nonlinear systems. \textcolor{black}{Incorporating uncertainty into the proposed metrics, especially uncertainty arising from surrogate-model approximation errors, necessitates a complete uncertainty-quantification pipeline. 
For probabilistic surrogate models, such as Gaussian process regression, predictive uncertainty information could be explicitly incorporated to quantify confidence in the extracted sensitivity measures. Developing such an integrated framework would provide a more comprehensive characterization of uncertainty in high-fidelity engineering simulations. Finally, automating insight extraction from PDPs, ICE curves, and the proposed metrics could reduce reliance on expert judgment and improve usability in real-world engineering workflows.
}

\section*{Acknowledgements}
\noindent Pramudita Satria Palar would like to acknowledge financial support from Institut Teknologi Bandung through the Riset Kolaborasi Internasional 2024 scheme. This research was also funded by the Indonesian Endowment Fund for Education (LPDP) on behalf of the Indonesian Ministry of Higher Education, Science and Technology and managed under the EQUITY Program (Contract No. 8128/IT1.B07.1/TA.00/2025). This work is part of the activities of ONERA - ISAE - ENAC joint research group. The research presented in this paper has been performed in the framework of the COLOSSUS project (Collaborative System of Systems Exploration of Aviation Products, Services and Business Models) and has received funding from the European Union Horizon Europe program under grant agreement n${^\circ}$ 101097120 and in the MIMICO research project funded by the Agence Nationale de la Recherche (ANR) n$^o$ ANR-24-CE23-0380. 
\section*{Declaration of  Competing interest}
 \noindent Pramudita Satria Palar reports that financial support was provided by Bandung Institute of Technology and LPDP. Nicolas Verstaevel and Paul Saves report that financial support was provided by the French National Research Agency. Joseph Morlier reports that financial support was provided by ISAE-SUPAERO Higher Institute of Aerospace Engineering. If there are other authors, they declare that they have no known competing financial interests or personal relationships that could have appeared to influence the work reported in this paper.
 
\section*{Data availability and replication}
\noindent The data on which this work is based is available on GitHub: \url{https://github.com/kanakaero/Dataset-of-Aerodynamic-and-Geometric-Coefficients-of-Airfoils} (the data do not belong to the authors).

\noindent Both the produced data and the code developed to reproduce the results on aerodynamics and airfoil problems will be available upon request.

\appendix
\section{Proving Inequalities Between PDP and ICE-based global sensitivity metrics}In this section, we prove an inequality involving the PD Function and the ICE for a large class of functions. Below, let $1 \le j \le m$ and let $C = \{1,2,\ldots,m\} \setminus \{j\}$. We have an interest in the GSA metrics for a single variable indexed by $j$ (\textit{i.e.}, $x_j$), with $\boldsymbol{x}_C$ referring to other variables indexed in $C$. Also, let $\mathbb{E}[\cdot]$ and $\mathbb{V}[\cdot]$ denote the expectation and the variance operator, respectively, and the subscript denotes that the operation is being taken with respect to a specific variable. In the following explanation, we use the notation for a general function $f(\boldsymbol{x})$ to derive the proof. We wish to verify the following inequality for a class of functions $f(x_j, \boldsymbol{x}_C)$:
\begin{eqnarray*}
\mathbb{E}_{\boldsymbol{x}_C}\left[ I_{\mathrm{ice}}(x_j; \boldsymbol{x}_C) \right] \ge I_{pdp}(x_j),
\end{eqnarray*}
or equivalently,
\begin{eqnarray*}
\mathbb{E}_{\boldsymbol{x}_C}\left[ \sqrt{\mathbb{V}_{x_j}[f(x_j, \boldsymbol{x}_C)]} \right] \ge \sqrt{ \mathbb{V}_{x_j} \left[ \mathbb{E}_{\boldsymbol{x}_C}[f(x_j, \boldsymbol{x}_C)] \right] }.
\end{eqnarray*}

\begin{theorem}
If $f(x_j, \boldsymbol{x}_C) = g(x_j) + h(\boldsymbol{x}_C)$, where $g$ is a function of $x_j$ only and $h$ is a function of $\boldsymbol{x}_C$ only, then $\mathbb{E}_{\boldsymbol{x}_C}\left[ I_{\mathrm{ice}}(x_j; \boldsymbol{x}_C) \right] = I_{pdp}(x_j)$.
\end{theorem}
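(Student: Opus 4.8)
The plan is to exploit the fact that additive separability makes the ICE-based importance \emph{constant across instances} and identical to the PDP-based importance, so the claimed equality reduces to a short chain of elementary identities. First I would write out a single ICE curve for a fixed realization $\boldsymbol{x}_C^{(i)}$ of the complementary variables: under the hypothesis, $f(x_j, \boldsymbol{x}_C^{(i)}) = g(x_j) + h(\boldsymbol{x}_C^{(i)})$, where the second term is a constant with respect to $x_j$. Since the variance operator is invariant under additive shifts, $\mathbb{V}_{x_j}\big[f(x_j, \boldsymbol{x}_C^{(i)})\big] = \mathbb{V}_{x_j}[g(x_j)]$, which does not depend on $\boldsymbol{x}_C^{(i)}$ at all. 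Hence $I_{ice}(x_j; \boldsymbol{x}_C^{(i)}) = \sqrt{\mathbb{V}_{x_j}[g(x_j)]}$ for every instance, and applying $\mathbb{E}_{\boldsymbol{x}_C}[\cdot]$ to a constant leaves it unchanged, giving $\mathbb{E}_{\boldsymbol{x}_C}\big[I_{ice}(x_j; \boldsymbol{x}_C)\big] = \sqrt{\mathbb{V}_{x_j}[g(x_j)]}$.

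Next I would compute the PDP side using Eq.~\eqref{eq:PDP}: $\hat{f}_{pdp}(x_j) = \mathbb{E}_{\boldsymbol{x}_C}\big[g(x_j) + h(\boldsymbol{x}_C)\big] = g(x_j) + \mathbb{E}_{\boldsymbol{x}_C}[h(\boldsymbol{x}_C)]$, which is again $g(x_j)$ plus a deterministic constant. Therefore $I_{pdp}(x_j) = \sqrt{\mathbb{V}_{x_j}\big[\hat{f}_{pdp}(x_j)\big]} = \sqrt{\mathbb{V}_{x_j}[g(x_j)]}$, and the two quantities coincide, which is the assertion.

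This argument has no genuine obstacle; the only points deserving a line of care are (i) that $\mathbb{E}_{\boldsymbol{x}_C}$ and $\mathbb{V}_{x_j}$ commute with the addition of a deterministic constant, which is immediate from linearity of expectation together with the definition of variance, and (ii) an implicit integrability assumption, namely $g \in L^2(\rho_{x_j})$ and $h \in L^1(\rho_{\boldsymbol{x}_C})$, so that all the integrals involved are finite. The latter is a benign regularity condition that holds trivially when $\hat f$ is a PCE surrogate, since a polynomial is square-integrable against the (data-inferred) product density. I would state these as standing assumptions and present the proof as the three-line chain of equalities above. The conceptual point worth emphasizing is that additivity encodes the absence of interaction between $x_j$ and $\boldsymbol{x}_C$: all ICE curves are vertical translates of one another and of $\hat f_{pdp}(x_j)$, so their dispersions are forced to be equal — exactly the boundary case $\mu_{I,\mathrm{ice}} = I_{pdp}$ identified in the main text as the signature of ``no interaction,'' and the degenerate endpoint of the inequality $\mathbb{E}_{\boldsymbol{x}_C}[I_{ice}(x_j;\boldsymbol{x}_C)] \ge I_{pdp}(x_j)$.
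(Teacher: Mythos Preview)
Your proposal is correct and follows essentially the same approach as the paper: both arguments reduce each side to $\sqrt{\mathbb{V}_{x_j}[g(x_j)]}$ by using that variance is invariant under the addition of a constant (the $h(\boldsymbol{x}_C^{(i)})$ term on the ICE side, the $\mathbb{E}_{\boldsymbol{x}_C}[h(\boldsymbol{x}_C)]$ term on the PDP side). Your added remarks on integrability and the geometric interpretation of the ICE curves as vertical translates are reasonable embellishments but not part of the paper's proof.
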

\vspace{3mm}

\begin{proof} 
Let $f(x_j, \boldsymbol{x}_C) = g(x_j)+h(\boldsymbol{x}_C)$ where $g$ is a function of $x_j$ only and $h$ is a function of $\boldsymbol{x}_C$ only. First, note that
\begin{eqnarray*}
\mathbb{E}_{\boldsymbol{x}_C}\left[ \sqrt{\mathbb{V}_{x_j}[f(x_j, \boldsymbol{x}_C)]} \right] = \mathbb{E}_{\boldsymbol{x}_C}\left[ \sqrt{\mathbb{V}_{x_j}[g(x_j)+h(\boldsymbol{x}_C)]} \right] = \mathbb{E}_{\boldsymbol{x}_C}\left[ \sqrt{\mathbb{V}_{x_j}[g(x_j)]} \right] =
\sqrt{\mathbb{V}_{x_j}[g(x_j)]}.
\end{eqnarray*}
Next, note that
\begin{eqnarray*}
\sqrt{ \mathbb{V}_{x_j} \left[ \mathbb{E}_{\boldsymbol{x}_C}[f(x_j, \boldsymbol{x}_C)] \right] } = \sqrt{ \mathbb{V}_{x_j} \left[ \mathbb{E}_{\boldsymbol{x}_C}[g(x_j) + h(\boldsymbol{x}_C)] \right] } = \sqrt{ \mathbb{V}_{x_j} \left[ g(x_j) + \mathbb{E}_{\boldsymbol{x}_C}[h(\boldsymbol{x}_C)] \right] } = \sqrt{\mathbb{V}_{x_j}[g(x_j)]}.
\end{eqnarray*}
Hence,
\begin{eqnarray*}
\mathbb{E}_{\boldsymbol{x}_C}\left[ \sqrt{\mathbb{V}_{x_j}[f(x_j, \boldsymbol{x}_C)]} \right] = \sqrt{ \mathbb{V}_{x_j} \left[ \mathbb{E}_{\boldsymbol{x}_C}[f(x_j, \boldsymbol{x}_C)] \right] }.
\end{eqnarray*}
\end{proof}
\vspace{3mm}

\begin{theorem}
If $f(x_j, \boldsymbol{x}_C) = g(x_j)h(\boldsymbol{x}_C)$, where $g$ is a function of $x_j$ only and $h$ is a function of $\boldsymbol{x}_C$ only, then $\mathbb{E}_{\boldsymbol{x}_C}\left[ I_{\mathrm{ice}}(x_j; \boldsymbol{x}_C) \right] \ge I_{pdp}(x_j)$. Moreover, equality holds if $h$ is a nonnegative function. 
\end{theorem}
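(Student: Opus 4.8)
The plan is to use the multiplicative structure to pull $h(\boldsymbol{x}_C)$ out of the inner variance $\mathbb{V}_{x_j}[\cdot]$ as a constant, which collapses both sides of the inequality to a common nonnegative factor $\sqrt{\mathbb{V}_{x_j}[g(x_j)]}$ multiplied by a term depending only on $h$. Once this is done, the inequality reduces to the triangle inequality for expectations (equivalently, Jensen's inequality applied to the convex function $t\mapsto|t|$), and the equality case follows immediately.

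First I would compute the left-hand side. Since $h(\boldsymbol{x}_C)$ does not depend on $x_j$, it behaves as a multiplicative constant inside $\mathbb{V}_{x_j}$, so $\mathbb{V}_{x_j}\!\left[g(x_j)h(\boldsymbol{x}_C)\right] = h(\boldsymbol{x}_C)^2\,\mathbb{V}_{x_j}[g(x_j)]$. Taking square roots gives $I_{ice}(x_j;\boldsymbol{x}_C) = |h(\boldsymbol{x}_C)|\,\sqrt{\mathbb{V}_{x_j}[g(x_j)]}$, and hence $\mathbb{E}_{\boldsymbol{x}_C}\!\left[I_{ice}(x_j;\boldsymbol{x}_C)\right] = \mathbb{E}_{\boldsymbol{x}_C}\!\left[|h(\boldsymbol{x}_C)|\right]\,\sqrt{\mathbb{V}_{x_j}[g(x_j)]}$. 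For the right-hand side, linearity gives $\mathbb{E}_{\boldsymbol{x}_C}[f(x_j,\boldsymbol{x}_C)] = g(x_j)\,\mathbb{E}_{\boldsymbol{x}_C}[h(\boldsymbol{x}_C)]$, so $I_{pdp}(x_j) = \sqrt{\mathbb{V}_{x_j}\!\big[g(x_j)\,\mathbb{E}_{\boldsymbol{x}_C}[h(\boldsymbol{x}_C)]\big]} = \big|\mathbb{E}_{\boldsymbol{x}_C}[h(\boldsymbol{x}_C)]\big|\,\sqrt{\mathbb{V}_{x_j}[g(x_j)]}$.

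Next, since $\sqrt{\mathbb{V}_{x_j}[g(x_j)]}\ge 0$ is common to both sides (and if it vanishes both sides are $0$, so the claim is trivial), the desired inequality is equivalent to $\mathbb{E}_{\boldsymbol{x}_C}\!\left[|h(\boldsymbol{x}_C)|\right] \ge \big|\mathbb{E}_{\boldsymbol{x}_C}[h(\boldsymbol{x}_C)]\big|$, which always holds. When $h\ge 0$ we have $|h| = h$ and $|\mathbb{E}_{\boldsymbol{x}_C}[h]| = \mathbb{E}_{\boldsymbol{x}_C}[h]$, so the two sides coincide, giving equality; more generally equality holds precisely when $h$ keeps a constant sign $\rho_{\boldsymbol{x}_C}$-almost everywhere, so nonnegativity is a convenient sufficient condition.

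I do not expect a genuine obstacle here; the only points requiring care are bookkeeping ones: the square root of $h(\boldsymbol{x}_C)^2$ is $|h(\boldsymbol{x}_C)|$ rather than $h(\boldsymbol{x}_C)$, the variance $\mathbb{V}_{x_j}$ commutes with multiplication by an $\boldsymbol{x}_C$-measurable factor only after squaring that factor, and the degenerate case $\mathbb{V}_{x_j}[g(x_j)]=0$ should be dispatched separately (trivially). Everything else is linearity of expectation and the elementary inequality $\mathbb{E}[|Z|]\ge|\mathbb{E}[Z]|$.
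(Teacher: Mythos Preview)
Your proposal is correct and follows essentially the same approach as the paper: factor $h(\boldsymbol{x}_C)$ out of $\mathbb{V}_{x_j}$, reduce both sides to $\sqrt{\mathbb{V}_{x_j}[g(x_j)]}$ times either $\mathbb{E}_{\boldsymbol{x}_C}[|h|]$ or $|\mathbb{E}_{\boldsymbol{x}_C}[h]|$, and then invoke Jensen's inequality for the absolute value. Your additional remarks about the degenerate case $\mathbb{V}_{x_j}[g(x_j)]=0$ and the sharper characterization of equality (constant sign a.e.) are welcome refinements not spelled out in the paper.
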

\vspace{3mm}

\begin{proof}
Let $f(x_j, \boldsymbol{x}_C) = g(x_j)h(\boldsymbol{x}_C)$ where $g$ is a function of $x_j$ only and $h$ is a function of $\boldsymbol{x}_C$ only. First, note that
\begin{eqnarray*}
\begin{array}{rcl}
\mathbb{E}_{\boldsymbol{x}_C}\left[ \sqrt{\mathbb{V}_{x_j}[f(x_j, \boldsymbol{x}_C)]} \right] & = & \mathbb{E}_{\boldsymbol{x}_C}\left[ \sqrt{\mathbb{V}_{x_j}[g(x_j)h(\boldsymbol{x}_C)]} \right] =  \mathbb{E}_{\boldsymbol{x}_C}\left[ \sqrt{h(\boldsymbol{x}_C)^2 \mathbb{V}_{x_j}[g(x_j)]} \right] \\
\\
& = & \mathbb{E}_{\boldsymbol{x}_C}\left[ |h(\boldsymbol{x}_C)| \sqrt{\mathbb{V}_{x_j}[g(x_j)]} \right] = \sqrt{\mathbb{V}_{x_j}[g(x_j)]} \mathbb{E}_{\boldsymbol{x}_C}[|h(\boldsymbol{x}_C)|]. \\
\end{array}
\end{eqnarray*}
Next, we have
\begin{eqnarray*}
\begin{array}{rcl}
\sqrt{ \mathbb{V}_{x_j} \left[ \mathbb{E}_{\boldsymbol{x}_C}[f(x_j, \boldsymbol{x}_C)] \right] } & = & \sqrt{ \mathbb{V}_{x_j} \left[ \mathbb{E}_{\boldsymbol{x}_C}[g(x_j)h(\boldsymbol{x}_C)] \right] } = \sqrt{ \mathbb{V}_{x_j} \left[ g(x_j) \mathbb{E}_{\boldsymbol{x}_C}[h(\boldsymbol{x}_C)] \right] } \\
\\
& = & \sqrt{ (\mathbb{E}_{\boldsymbol{x}_C}[h(\boldsymbol{x}_C)])^2   \mathbb{V}_{x_j}[g(x_j)] } = |\mathbb{E}_{\boldsymbol{x}_C}[h(\boldsymbol{x}_C)]| \sqrt{\mathbb{V}_{x_j}[g(x_j)]}.
\end{array}
\end{eqnarray*}
By Jensen's inequality, $\mathbb{E}_{\boldsymbol{x}_C}[|h(\boldsymbol{x}_C)|] \ge |\mathbb{E}_{\boldsymbol{x}_C}[h(\boldsymbol{x}_C)]|$, and so,
\begin{eqnarray*}
\begin{array}{rcl}
\mathbb{E}_{\boldsymbol{x}_C}\left[ \sqrt{\mathbb{V}_{x_j}[f(x_j, \boldsymbol{x}_C)]} \right] = \sqrt{\mathbb{V}_{x_j}[g(x_j)]} \mathbb{E}_{\boldsymbol{x}_C}[|h(\boldsymbol{x}_C)|] & \ge & |\mathbb{E}_{\boldsymbol{x}_C}[h(\boldsymbol{x}_C)]| \sqrt{\mathbb{V}_{x_j}[g(x_j)]} \\
\\
& = & \sqrt{ \mathbb{V}_{x_j} \left[ \mathbb{E}_{\boldsymbol{x}_C}[f(x_j, \boldsymbol{x}_C)] \right] } \\
\end{array}
\end{eqnarray*}
Moreover, if $h$ is a nonnegative function, then $\mathbb{E}_{\boldsymbol{x}_C}[h(\boldsymbol{x}_C)] \ge 0$. In this case, $\mathbb{E}_{\boldsymbol{x}_C}[|h(\boldsymbol{x}_C)|] = |\mathbb{E}_{\boldsymbol{x}_C}[h(\boldsymbol{x}_C)]|$ and equality holds above.
\end{proof}
\vspace{3mm}

Before we state and prove the next theorem that covers a more general class of functions for which the above inequality holds, we first prove the following lemma.
\vspace{3mm}

\begin{lemma}
If $A \in \mathbb{R}^{k \times k}$ is a symmetric and positive semidefinite matrix, then $\varphi(z) = \sqrt{ z^T A z },\ z \in \mathbb{R}^k$ is a convex function. Moreover, if $Z$ is a $k$-dimensional random vector, then $E[\sqrt{ Z^T A Z }] \ge \sqrt{ E(Z)^T A E(Z) }$. 
\end{lemma}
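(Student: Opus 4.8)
The plan is to recognize $\varphi(z)=\sqrt{z^{T}Az}$ as the pullback of the Euclidean norm under a linear map, which makes convexity essentially automatic, and then to obtain the inequality for $Z$ directly from the multivariate Jensen inequality. Concretely, since $A$ is symmetric and positive semidefinite, the spectral theorem provides a symmetric positive semidefinite square root $A^{1/2}$ with $A=(A^{1/2})^{T}A^{1/2}$; writing $B:=A^{1/2}$ we get $z^{T}Az=\|Bz\|_{2}^{2}\ge 0$ for every $z$, so $\varphi(z)=\|Bz\|_{2}$ is well defined. (Equivalently one could use a Cholesky-type factorization $A=B^{T}B$.)

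Next I would establish convexity by showing $\varphi=\|\cdot\|_{2}\circ B$ is a seminorm. Because $B$ is linear and $\|\cdot\|_{2}$ is a norm, $\varphi$ is absolutely homogeneous, $\varphi(\lambda z)=|\lambda|\,\varphi(z)$, and subadditive, $\varphi(z+w)=\|Bz+Bw\|_{2}\le\|Bz\|_{2}+\|Bw\|_{2}=\varphi(z)+\varphi(w)$, the latter from the triangle inequality for $\|\cdot\|_{2}$. Combining these two properties, for any $t\in[0,1]$ and any $z,w\in\mathbb{R}^{k}$,
\[
\varphi\bigl(tz+(1-t)w\bigr)\le \varphi(tz)+\varphi\bigl((1-t)w\bigr)=t\,\varphi(z)+(1-t)\,\varphi(w),
\]
so $\varphi$ is convex on all of $\mathbb{R}^{k}$.

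For the second claim I would invoke Jensen's inequality: from $\varphi(z)=\|Bz\|_{2}\le\|B\|_{\mathrm{op}}\|z\|_{2}$ one sees that $\varphi(Z)$ is integrable whenever $Z$ is, and the multivariate Jensen inequality applied to the convex function $\varphi$ gives $\mathbb{E}[\varphi(Z)]\ge\varphi(\mathbb{E}[Z])$, i.e. $\mathbb{E}\bigl[\sqrt{Z^{T}AZ}\bigr]\ge\sqrt{\mathbb{E}(Z)^{T}A\,\mathbb{E}(Z)}$, which is the asserted inequality. I do not expect a genuine obstacle here; the only point needing care is to resist proving convexity through a Hessian computation, which degenerates on the null space of $A$ where $\varphi$ fails to be differentiable — routing through the seminorm structure avoids this cleanly. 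The secondary technicality, integrability so that Jensen applies, is dispatched by the operator-norm bound above.
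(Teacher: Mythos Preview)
Your proof is correct and follows essentially the same approach as the paper: factor $A=B^{T}B$ via the spectral theorem, identify $\varphi(z)=\|Bz\|_{2}$, deduce convexity from the triangle inequality, and conclude with Jensen's inequality. The only differences are cosmetic---you route convexity through the seminorm axioms (homogeneity plus subadditivity) rather than checking the convexity inequality directly, and you add a brief remark on integrability that the paper omits.
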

\vspace{3mm}

\begin{proof}
Suppose $A$ is a symmetric positive semidefinite matrix. Then, $A = M^T M$ for some matrix $M \in \mathbb{R}^{k \times k}$ (\textit{e.g.}, $M$ may be obtained via spectral decomposition). Hence,
\begin{eqnarray*}
\varphi(z) = \sqrt{ z^T A z} = \sqrt{ z^T M^T M z } = \sqrt { (Mz)^T (Mz) } = \|Mz\|,
\end{eqnarray*}
where $\| \cdot \|$ is the 2-norm in $\mathbb{R}^k$. To prove the convexity of $\varphi(z)$, let $z_1, z_2 \in \mathbb{R}^k$ and let $0 \le \lambda \le 1$. Note that
\begin{eqnarray*}
\begin{array}{rcl}
\varphi(\lambda z_1 + (1-\lambda)z_2) = \|M(\lambda z_1 + (1-\lambda)z_2)\| & = & \| \lambda M z_1 + (1-\lambda) M z_2\| \\
& \le & \lambda \|Mz_1\| + (1-\lambda) \|Mz_2\| \\
& = & \lambda \varphi(z_1) + (1-\lambda) \varphi(z_2), \\
\end{array}
\end{eqnarray*}
where the above inequality follows from the triangle inequality. This shows that
\begin{eqnarray*}
\varphi(\lambda z_1 + (1-\lambda)z_2) \le \lambda \varphi(z_1) + (1-\lambda) \varphi(z_2),
\end{eqnarray*}
and so, $\varphi(z)$ is a convex function. Now, by Jensen's Inequality, we have $E[\varphi(Z)] \ge \varphi(E[Z])$, or equivalently, $E[\sqrt{ Z^T A Z }] \ge \sqrt{ E(Z)^T A E(Z) }$.
\end{proof}
\vspace{3mm}

\begin{theorem}
If $f(x_j, \boldsymbol{x}_C) = g_1(x_j)h_1(\boldsymbol{x}_C) + \ldots + g_k(x_j)h_k(\boldsymbol{x}_C)$, where $g_1,\ldots,g_k$ are functions of $x_j$ only and $h_1,\ldots,h_k$ are functions of $\boldsymbol{x}_C$ only, then $\mathbb{E}_{\boldsymbol{x}_C}\left[ I_{\mathrm{ice}}(x_j; \boldsymbol{x}_C) \right] \ge I_{pdp}(x_j)$.
\end{theorem}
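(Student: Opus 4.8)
The plan is to reduce the claim to the Lemma by recognizing that, for each fixed $\boldsymbol{x}_C$, the conditional variance $\mathbb{V}_{x_j}[f(x_j,\boldsymbol{x}_C)]$ is a quadratic form in the vector $z(\boldsymbol{x}_C)=(h_1(\boldsymbol{x}_C),\ldots,h_k(\boldsymbol{x}_C))^T$ whose matrix is precisely the (positive semidefinite) covariance matrix of the random variables $g_1(x_j),\ldots,g_k(x_j)$, and that $\mathbb{V}_{x_j}\!\left[\mathbb{E}_{\boldsymbol{x}_C}[f(x_j,\boldsymbol{x}_C)]\right]$ is the same quadratic form evaluated at the mean of $z(\boldsymbol{x}_C)$. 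Then the desired inequality is exactly Jensen's inequality for the convex map supplied by the Lemma.

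First I would fix $\boldsymbol{x}_C$, treat the $h_i(\boldsymbol{x}_C)$ as constants, and use bilinearity of the covariance to expand
\[
\mathbb{V}_{x_j}\!\left[\sum_{i=1}^k g_i(x_j)h_i(\boldsymbol{x}_C)\right]=\sum_{i=1}^k\sum_{l=1}^k h_i(\boldsymbol{x}_C)h_l(\boldsymbol{x}_C)\,\mathrm{Cov}_{x_j}\!\big(g_i(x_j),g_l(x_j)\big).
\]
Defining $A\in\mathbb{R}^{k\times k}$ by $A_{il}=\mathrm{Cov}_{x_j}(g_i(x_j),g_l(x_j))$, this reads $\mathbb{V}_{x_j}[f(x_j,\boldsymbol{x}_C)]=z(\boldsymbol{x}_C)^TA\,z(\boldsymbol{x}_C)$. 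Since $A$ is a covariance matrix it is symmetric and positive semidefinite, so the Lemma applies with $Z=z(\boldsymbol{x}_C)$ the random vector induced by $\boldsymbol{x}_C\sim\boldsymbol{\rho}_{\boldsymbol{x}_C}$, giving
\[
\mathbb{E}_{\boldsymbol{x}_C}\!\left[\sqrt{\mathbb{V}_{x_j}[f(x_j,\boldsymbol{x}_C)]}\right]=\mathbb{E}\!\left[\sqrt{Z^TAZ}\right]\ \ge\ \sqrt{\mathbb{E}[Z]^TA\,\mathbb{E}[Z]}.
\]
Next I would identify the right-hand side with $I_{pdp}(x_j)$: since $\mathbb{E}_{\boldsymbol{x}_C}[f(x_j,\boldsymbol{x}_C)]=\sum_i g_i(x_j)\bar h_i$ with $\bar h_i:=\mathbb{E}_{\boldsymbol{x}_C}[h_i(\boldsymbol{x}_C)]=\mathbb{E}[Z]_i$, the same bilinearity computation yields $\mathbb{V}_{x_j}\!\left[\mathbb{E}_{\boldsymbol{x}_C}[f(x_j,\boldsymbol{x}_C)]\right]=\sum_{i,l}\bar h_i\bar h_l A_{il}=\mathbb{E}[Z]^TA\,\mathbb{E}[Z]$. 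Taking square roots and chaining with the inequality above gives $\mathbb{E}_{\boldsymbol{x}_C}[I_{ice}(x_j;\boldsymbol{x}_C)]\ge I_{pdp}(x_j)$. Theorems~1 and~2 fall out as the special cases of an extra $x_j$-free additive term together with $k=1$, and of $k=1$, respectively.

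The main obstacle — really the single idea on which everything hinges — is spotting that the conditional variance and the PDP variance are the \emph{same} quadratic form $u\mapsto u^TAu$ evaluated at $z(\boldsymbol{x}_C)$ and at its mean, so the statement is Jensen applied to the convex function $u\mapsto\sqrt{u^TAu}$ of the Lemma. Everything else is bookkeeping: checking that $A$ is well defined and positive semidefinite (it is the Gram/covariance matrix of the square-integrable variables $g_i(x_j)$), and mild integrability of the $h_i(\boldsymbol{x}_C)$ so the expectations exist — both automatic for a PCE surrogate, whose finitely many polynomial factors possess all moments under the assumed product density.
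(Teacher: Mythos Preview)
Your proposal is correct and follows essentially the same approach as the paper: both express $\mathbb{V}_{x_j}[f(x_j,\boldsymbol{x}_C)]$ as the quadratic form $H(\boldsymbol{x}_C)^T\,\mathrm{Cov}_{x_j}(G(x_j))\,H(\boldsymbol{x}_C)$, identify $\mathbb{V}_{x_j}[\mathbb{E}_{\boldsymbol{x}_C} f]$ as the same form evaluated at $\mathbb{E}_{\boldsymbol{x}_C}[H(\boldsymbol{x}_C)]$, and invoke the Lemma (Jensen for $z\mapsto\sqrt{z^TAz}$) to conclude.
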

\vspace{3mm}

\begin{proof}
Let $f(x_j, \boldsymbol{x}_C) = g_1(x_j)h_1(\boldsymbol{x}_C) + \ldots + g_k(x_j)h_k(\boldsymbol{x}_C)$, where $g_1,\ldots,g_k$ are functions of $x_j$ only and $h_1,\ldots,h_k$ are functions of $\boldsymbol{x}_C$ only. First, note that
\begin{eqnarray*}
\begin{array}{rcl}
\mathbb{V}_{x_j}[f(x_j, \boldsymbol{x}_C)] & = & \mathbb{V}_{x_j}[ g_1(x_j) h_1(\boldsymbol{x}_C) + \ldots + g_k(x_j) h_k(\boldsymbol{x}_C) ] \\
\\
& = & \displaystyle{ \sum_{\ell=1}^k h_{\ell}(\boldsymbol{x}_C)^2 \mathbb{V}_{x_j}[g_{\ell}(x_j)] + 2 \sum_{1 \le \ell < r \le k} h_{\ell}(\boldsymbol{x}_C) h_r(\boldsymbol{x}_C) \mbox{Cov}_{x_j}(g_{\ell}(x_j), g_r(x_j))} \\
\end{array}
\end{eqnarray*}
Define the random vectors $G(x_j) = [g_1(x_j), \ldots, g_k(x_j)]^T$ and $H(\boldsymbol{x}_C) = [h_1(\boldsymbol{x}_C), \ldots, h_k(\boldsymbol{x}_C)]^T$. Now, note that 
\begin{eqnarray*}
\mathbb{V}_{x_j}[f(x_j, \boldsymbol{x}_C)] =  \mathbb{V}_{x_j}[ H(\boldsymbol{x}_C)^T G(x_j) ] = H(\boldsymbol{x}_C)^T \mbox{Cov}_{x_j}(G(x_j)) H(\boldsymbol{x}_C),
\end{eqnarray*}
where
\begin{eqnarray*}
\mbox{Cov}_{x_j}(G(x_j)) =
\begin{bmatrix}
\mathbb{V}_{x_j}[g_1(x_j)] & \mbox{Cov}_{x_j}(g_1(x_j), g_2(x_j)) & \ldots & \mbox{Cov}_{x_j}(g_1(x_j), g_k(x_j)) \\
\mbox{Cov}_{x_j}(g_2(x_j), g_1(x_j)) & \mathbb{V}_{x_j}[g_2(x_j)] & \ldots & \mbox{Cov}_{x_j}(g_2(x_j), g_k(x_j)) \\
\vdots & \vdots & \ddots & \vdots \\
\mbox{Cov}_{x_j}(g_k(x_j), g_1(x_j)) & \mbox{Cov}_{x_j}(g_k(x_j), g_2(x_j)) & \ldots & \mathbb{V}_{x_j}[g_k(x_j)] \\
\end{bmatrix}
\end{eqnarray*}
is the covariance matrix of $G(x_j)$. Hence,
\begin{eqnarray*}
\mathbb{E}_{\boldsymbol{x}_C}\left[ I_{\mathrm{ice}}(x_j; \boldsymbol{x}_C) \right] = \mathbb{E}_{\boldsymbol{x}_C}\left[ \sqrt{\mathbb{V}_{x_j}[f(x_j, \boldsymbol{x}_C)]} \right] = \mathbb{E}_{\boldsymbol{x}_C}\left[ \sqrt{ H(\boldsymbol{x}_C)^T \mbox{Cov}_{x_j}(G(x_j)) H(\boldsymbol{x}_C) } \right].
\end{eqnarray*}
Next, note that
\begin{eqnarray*}
\begin{array}{c}
\mathbb{V}_{x_j} \left[ \mathbb{E}_{\boldsymbol{x}_C}[f(x_j, \boldsymbol{x}_C)] \right] = \mathbb{V}_{x_j} \left[ \mathbb{E}_{\boldsymbol{x}_C}[ g_1(x_j) h_1(\boldsymbol{x}_C) + \ldots + g_k(x_j) h_k(\boldsymbol{x}_C) ] \right] \\
\\
= \mathbb{V}_{x_j} \left[ g_1(x_j) \mathbb{E}_{\boldsymbol{x}_C}[h_1(\boldsymbol{x}_C)] + \ldots + g_k(x_j) \mathbb{E}_{\boldsymbol{x}_C}[h_k(\boldsymbol{x}_C)] \right] \\
\\
= \displaystyle{ \sum_{\ell=1}^k (\mathbb{E}_{\boldsymbol{x}_C}[h_{\ell}(\boldsymbol{x}_C)])^2 \mathbb{V}_{x_j}[g_{\ell}(x_j)] + 2 \sum_{1 \le \ell < r \le k} \mathbb{E}_{\boldsymbol{x}_C}[h_{\ell}(\boldsymbol{x}_C)] \mathbb{E}_{\boldsymbol{x}_C}[h_r(\boldsymbol{x}_C)] \mbox{Cov}_{x_j}(g_{\ell}(x_j), g_r(x_j))} \\
\end{array}
\end{eqnarray*}
Since $\mathbb{E}_{\boldsymbol{x}_C}[H(\boldsymbol{x}_C)] = [\mathbb{E}_{\boldsymbol{x}_C}[h_1(\boldsymbol{x}_C)], \ldots, \mathbb{E}_{\boldsymbol{x}_C}[h_k(\boldsymbol{x}_C)]]^T$, we get
\begin{eqnarray*}
\mathbb{V}_{x_j} \left[ \mathbb{E}_{\boldsymbol{x}_C}[f(x_j, \boldsymbol{x}_C)] \right] = \mathbb{E}_{\boldsymbol{x}_C}[H(\boldsymbol{x}_C)]^T \mbox{Cov}_{x_j}(G(x_j)) \mathbb{E}_{\boldsymbol{x}_C}[H(\boldsymbol{x}_C)],
\end{eqnarray*}
and so
\begin{eqnarray*}
I_{pdp}(x_j) = \sqrt{ \mathbb{V}_{x_j} \left[ \mathbb{E}_{\boldsymbol{x}_C}[f(x_j, \boldsymbol{x}_C)] \right] } = \sqrt{ \mathbb{E}_{\boldsymbol{x}_C}[H(\boldsymbol{x}_C)]^T \mbox{Cov}_{x_j}(G(x_j)) \mathbb{E}_{\boldsymbol{x}_C}[H(\boldsymbol{x}_C)] }.
\end{eqnarray*}

Now, using the previous lemma with $Z = H(\boldsymbol{x}_C)$ and with $A=\mbox{Cov}_{x_j}(G(x_j))$, we obtain
\begin{eqnarray*}
\begin{array}{rcl} 
\mathbb{E}_{\boldsymbol{x}_C}\left[ I_{\mathrm{ice}}(x_j; \boldsymbol{x}_C) \right] & = & \mathbb{E}_{\boldsymbol{x}_C}\left[ \sqrt{ H(\boldsymbol{x}_C)^T \mbox{Cov}_{x_j}(G(x_j)) H(\boldsymbol{x}_C) } \right] \\
\\
& \ge & \sqrt{ \mathbb{E}_{\boldsymbol{x}_C}[H(\boldsymbol{x}_C)]^T \mbox{Cov}_{x_j}(G(x_j)) \mathbb{E}_{\boldsymbol{x}_C}[H(\boldsymbol{x}_C)] } = I_{pdp}(x_j). \\
\end{array}
\end{eqnarray*}
\end{proof}
\vspace{3mm}

\begin{corollary}
If $f$ is a polynomial in $x_1,\ldots,x_m$, then $\mathbb{E}_{\boldsymbol{x}_C}\left[ I_{\mathrm{ice}}(x_j; \boldsymbol{x}_C) \right] \ge I_{pdp}(x_j)$ for any $j=1,\ldots,m$.
\end{corollary}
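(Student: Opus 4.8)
The plan is to reduce this corollary to Theorem~3 above (the sum-of-products case) by exhibiting the separable decomposition that its hypothesis requires. Since $f$ is a polynomial in $x_1,\ldots,x_m$, I would first group its monomials according to the exponent of the distinguished variable $x_j$. Letting $d$ be the degree of $f$ in $x_j$, this yields
\begin{eqnarray*}
f(x_j,\boldsymbol{x}_C) = \sum_{\ell=0}^{d} x_j^{\ell}\, p_{\ell}(\boldsymbol{x}_C),
\end{eqnarray*}
where each $p_{\ell}$ is the coefficient polynomial of $x_j^{\ell}$ and hence a polynomial in the variables indexed by $C$ alone. Equivalently, one need not group at all: each monomial $x^{\alpha} = x_j^{\alpha_j}\prod_{i\in C} x_i^{\alpha_i}$ already factors into an $x_j$-part times an $\boldsymbol{x}_C$-part, so $f$ is a finite sum of such products on the nose; grouping by the power of $x_j$ merely keeps the number of terms minimal.

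Next I would match this to the setting of Theorem~3: set $k = d+1$ and, for $\ell = 0,\ldots,d$, let $g_{\ell}(x_j) = x_j^{\ell}$, a function of $x_j$ only, and $h_{\ell}(\boldsymbol{x}_C) = p_{\ell}(\boldsymbol{x}_C)$, a function of $\boldsymbol{x}_C$ only, after the harmless reindexing of $\{0,\ldots,d\}$ as $\{1,\ldots,k\}$. Then $f(x_j,\boldsymbol{x}_C) = \sum_{\ell=1}^{k} g_{\ell}(x_j) h_{\ell}(\boldsymbol{x}_C)$ is exactly the form covered by Theorem~3, so $\mathbb{E}_{\boldsymbol{x}_C}\!\left[ I_{ice}(x_j;\boldsymbol{x}_C) \right] \ge I_{pdp}(x_j)$ follows immediately; since $j$ was arbitrary the inequality holds for every $j=1,\ldots,m$. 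Note that no linear independence of the $g_{\ell}$ is needed, and the matrix $\mbox{Cov}_{x_j}(G(x_j))$ that appears in the proof of Theorem~3 is automatically symmetric positive semidefinite, so the supporting lemma applies with nothing extra to verify.

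The only genuine subtlety --- the nearest thing to an obstacle here --- is that Theorem~3 and its underlying lemma tacitly assume that all the means, variances, and covariances involved are finite. For a polynomial integrand this is automatic as soon as the input marginals $\rho_{x_i}$ have finite moments of every order, which holds for the bounded domains and the standard distributions used throughout the paper; I would therefore either appeal to the paper's standing assumptions on $\boldsymbol{\rho}$ or record this moment condition explicitly as a hypothesis of the corollary. With finiteness secured, the whole argument is the short reduction just described, and no further calculation is required.
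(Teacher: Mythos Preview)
Your proposal is correct and follows essentially the same route as the paper: observe that any multivariate polynomial decomposes as $\sum_{\ell} g_{\ell}(x_j) h_{\ell}(\boldsymbol{x}_C)$ and invoke Theorem~3. Your version is simply more explicit (grouping by powers of $x_j$ and noting the finiteness-of-moments caveat), but the underlying argument is identical.
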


\begin{proof}
If $f$ is a polynomial in $x_1,\ldots,x_m$, then for any $j=1,\ldots,m$, $f$ has the form $f(x_j, \boldsymbol{x}_C) = g_1(x_j)h_1(\boldsymbol{x}_C) + \ldots + g_k(x_j)h_k(\boldsymbol{x}_C)$, where $g_1,\ldots,g_k$ are functions of $x_j$ only and $h_1,\ldots,h_k$ are functions of $\boldsymbol{x}_C$ only. The result follows from the previous theorem.
\end{proof}
\vspace{3mm}

The above corollary shows that the ICE-PDP inequality holds for all multivariate polynomials. However, it holds for a much larger class of functions that has some combination of additive and multiplicative separability, such as
\begin{eqnarray*}
f(x_1,x_2) = x_1^2 \sin(\pi x_2) + e^{x_1} {x_2}^3 + \cos(\pi x_1) \sqrt{x_2}.
\end{eqnarray*}
\textcolor{black}{Various surrogate and machine-learning models admit such a structure, either exactly or approximately. Examples include support vector regression models with polynomial kernels, as well as expressions obtained via genetic programming when the search space is constrained to enforce additive or multiplicative separability. However, for more complex models, such as feedforward neural networks with multiple nonlinear hidden layers, the above proof is not directly applicable due to the absence of explicit additive or multiplicative separability. We conjecture that the inequality also holds for more general functions, though an investigation of this claim will be left for future work.}


\bibliographystyle{elsarticle-num}
\bibliography{sample_origin.bib}
\end{document}